\newtheorem{theorem}{Theorem}
\newtheorem{lemma}{Lemma}
\newtheorem{definition}{Definition}
\renewcommand{\phi}{\varphi}
\renewcommand{\epsilon}{\varepsilon}
\newenvironment{proof}{\noindent{\sf Proof.}}{\hfill $\boxtimes\hspace{2mm}$\linebreak}
\newcommand{\qed}{\hfill $\boxtimes\hspace{1mm}$}
\begin{document}
%
\title{Perfect Recall and Navigation Strategies}
\title{Armstrong's Axioms, Perfect Recall and Navigation Strategies}
\title{Navigability with Imperfect Information}
\title{Navigation with Imperfect Information}
\title{Armstrong's Axioms and Navigation with Imperfect Information}
\title{Armstrong's Axioms and Perfect Recall}
\title{Armstrong's Axioms and Navigation Strategies}








\author{Kaya Deuser \and Pavel Naumov\\ 
Vassar College\\
124 Raymond Avenue\\ 
Poughkeepsie, NY 12604\\
\{kdeuser, pnaumov\}@vassar.edu
}


\maketitle

\begin{abstract}
The paper investigates navigability with imperfect information. It shows that the properties of navigability with perfect recall are exactly those captured by Armstrong's axioms from database theory. If the assumption of perfect recall is omitted, then Armstrong's transitivity axiom is not valid, but it can be replaced by a weaker principle. The main technical results are soundness and completeness theorems for the logical systems describing properties of navigability with and without perfect recall.   
\end{abstract}

\section{Introduction}

Navigation is a commonly encountered task by autonomous agents that need to reach a destination or, more generally, to find a solution to a problem, where the solution is a sequence of instructions that transition a system from one state to another. This task is often performed when the agent does not have precise information about her current location. Examples of such agents are self-navi\-gating missiles, self-driving cars, and robotic vacuum cleaners.  

 Figure~\ref{intro-example figure} depicts an example $T_0$ of a transition system. This system consists of eight states $a,\dots,h$ represented by the vertices of the graph. The agent cannot distinguish state $a$ from state $b$ and state $c$ from state $d$, which is denoted by dashed lines connecting the indistinguishable states. The directed edges of the graph represent transitions that the system can make and the labels on these edges represent the instructions that the agent must give to do this. For example, if in state $a$ the agent executes instruction $0$, then the system transitions into state $g$, if, instead, the agent executes instruction $1$, the system transitions into state $e$. Although in this paper we consider non-deterministic transition systems where the execution of the same instruction can transition the system into one of the several states, for the sake of simplicity the transition system $T_0$ is deterministic.

\begin{figure}[ht]
\begin{center}
\vspace{0mm}
\scalebox{.7}{\includegraphics{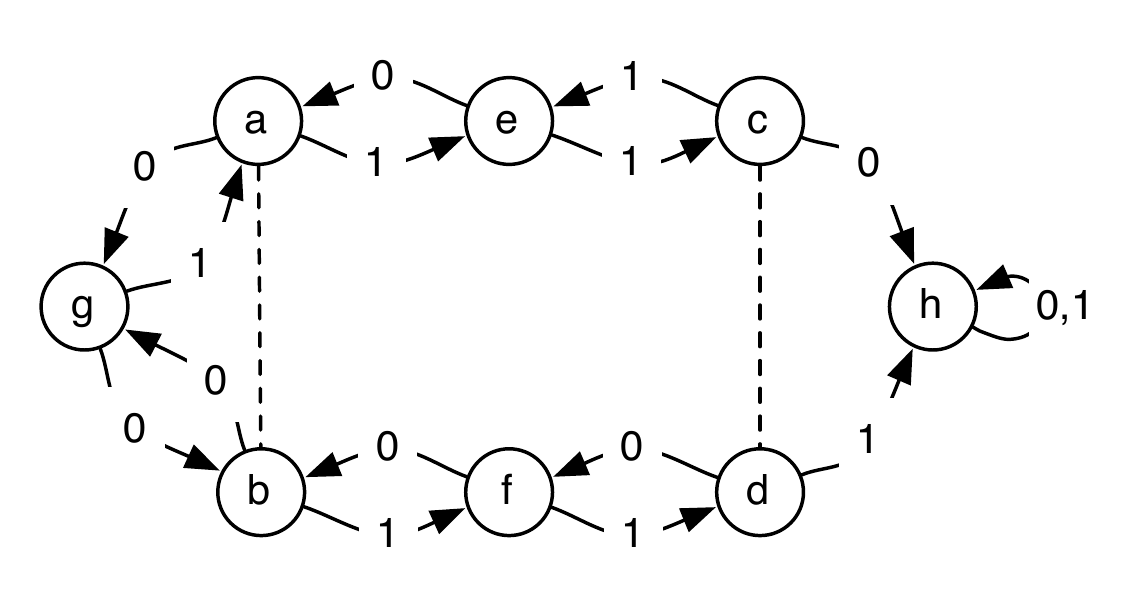}}
\caption{Transition System $T_0$.}\label{intro-example figure}
\end{center}
\vspace{0mm}
\end{figure}
Note that, in system $T_0$ the agent can navigate from state $a$ to state $c$ by using instruction $1$ in state $a$ and the same instruction $1$ again in state $e$. However, a different sequence of instructions is required to reach state $c$ from state $b$. As the agent cannot distinguish state $a$ from state $b$, in state $a$ she does not know which instructions to use to accomplish her goal. Moreover, if the agent does reach state $c$, she cannot verify that the task is completed, because she cannot distinguish state $c$ from state $d$. For this reason, in this paper instead of navigation between states, we consider navigation between equivalence classes of states with respect to the indistinguishability relation. For example, the agent can navigate from class $[a]=\{a,b\}$ to class $[c]=\{c,d\}$ by using instruction $1$ in each state she passes.

\subsubsection{Perfect Recall}
In order to achieve a goal, the agent would need to follow a certain strategy that must be stored in her memory. We assume that the strategy is permanently stored (``hardwired") in the memory and cannot be changed during the navigation. For example, a robotic vacuum cleaner might be programmed to change direction when it encounters a wall, to make a circle when the dirt sensor is triggered, and to follow a straight path otherwise. A crucial  question for us is if the vacuum cleaner can remember the walls and the dirty spots it has previously encountered. In other words, we distinguish an agent that can keep track of the classes of states she visited and the instructions she used from an agent who only knows her current state. We say that in the former case the agent has {\em perfect recall} and in the later she does not. 

A strategy of an agent without perfect recall can only use information available to her  about {\em the current state} to decide which instruction to use. In other words, a strategy of such an agent is a function that maps classes of indistinguishable {\em states} into instructions. A strategy of an agent with perfect recall can use information about {\em the history} of previous transitions to decide which instruction to use. In other words, a strategy of such an agent is a function that maps classes of indistinguishable {\em histories} into instructions. We call the former {\em memoryless} strategies and the later {\em recall} strategies.

In theory, a robotic vacuum cleaner without perfect recall is only equipped with read-only memory to store the strategy. A theoretical
robotic vacuum with perfect recall in addition to read-only memory that contains the strategy also has an unlimited read-write memory that contains logs of all previous transitions. In practice, the most popular brand of robotic vacuum cleaners, Roomba, is only using read-write memory to store information, such as a cleaning schedule, that is not used for navigation. This means that Roomba is using a memoryless strategy. The other popular robotic vacuum cleaner, Neato, is scanning the room before cleaning to use this information for navigation purposes. Although, of course, Neato only has a read-write memory of a limited size, the navigation strategy used by Neato is an example of a recall strategy. 

\subsubsection{Examples}
Transition system $T_0$ has a recall strategy to navigate from class $[a]$ to class $[e]$. This might be surprising, because the same strategy must work to navigate from both state $a$ and state $b$ of class $[a]$ to the only state $e$ of class $[e]$. Yet, one would expect to use instruction $1$ in state $a$ and instruction $0$ in state $b$ to get to $e$. The recall strategy to navigate from class $[a]$ to class $[e]$ consists in first using instruction $0$ once no matter what the starting state is, and then using instruction $1$ until state $e$ is reached. When this strategy is used starting from a state in class $[a]$, the system first transitions into state $g$, then into state $a$, and then finally into state $e$. This is a recall strategy because it uses a different instruction in state $a$ depending if the system has already visited a state in class $[g]$ or not. 

To show that there is no memoryless strategy to navigate from class $[a]$ to class $[e]$ note that any such strategy will have to use the same instruction $i_0$ at every visit to states $a$ and $b$. If $i_0=0$, then when the navigation starts in state $b$, the system stays ``locked" in set of states $\{a,g,b\}$ and never reaches state $e$. Similarly, if $i_0=1$, then when the navigation starts in state $b$, the system stays ``locked" in set of states $\{b,f,d,h\}$ and never reaches state $e$. Thus, there is no memoryless strategy to navigate from class $[a]$ to class $[e]$.





In some situation, even when there is a path between appropriate states, there might be no memoryless strategy and no recall strategy. For example, transition system $T_0$ has no recall strategy to navigate from class $[c]$ to class $[g]$. Indeed, if such a recall strategy exists, it would have to use the same instruction $i_0$ when the system {\em starts} in either state $c$ or state $d$. Suppose $i_0=0$. Thus, if the system {\em starts} in state $c$ it is ``locked" in state $h$. Assume now that $i_0=1$. Hence, if the system {\em starts} in state $d$ it is ``locked" in state $h$. Therefore, there is no recall strategy from class $[c]$ to class $[g]$.

\begin{table}[ht]
\centering
\begin{tabular}{ c | c c c c c c}
            & $\{a,b\}$ & $\{c,d\}$ & $\{e\}$   & $\{f\}$   & $\{g\}$ & $\{h\}$\\ \hline
$\{a,b\}$   & m         & m         & r         & r         & m       & r$^\dag$ \\
$\{c,d\}$   & -         & m         & -         & -         & -       & r$^\dag$ \\
$\{e\}$     & m         & m         & m         & r$^\dag$  & m       & m \\
$\{f\}$     & m         & m         & r$^\dag$  & m         & m       & m \\
$\{g\}$     & m         & m         & m         & m         & m       & m$^\dag$ \\
$\{h\}$     & -         & -         & -         & -         & -       & m\\

\end{tabular}
\caption{Navigability between classes in system $T_0$.}\label{s r table}
\end{table}


Table~\ref{s r table} shows when memoryless strategies and recall strategies between classes of states of transition system $T_0$ exist. In this table, letter ``m"  at the intersection of row $x$ and column $y$ denotes the existence of a memoryless strategy from class $x$ to class $y$. Letter ``r" denotes the existence of a recall strategy, but no memoryless strategy, and dash ``-" denotes that neither memoryless strategy nor recall strategy exist. Symbol $\dag$ marks the cases that we have  found to be interesting to think about.

\subsubsection{Navigability between Sets}
As we have seen, there is a recall strategy, but no memoryless strategy, to navigate from class $[a]$ to class $[e]$. One can similarly show that there is no memoryless strategy to navigate from class $[a]$ to class $[f]$. However, if the goal is to navigate from class $[a]$ to either class $[e]$ or to class $[f]$, then there is a memoryless strategy to do this. Indeed, consider a memoryless strategy that uses instruction $1$ in every state. This strategy can transition the system from a state of class $[a]$ to a state of a class in set $\{[e],[f]\}$. 

Thus, navigability between {\em sets of classes} can not be reduced to navigability between {\em classes}. For this reason, in this paper we study properties of navigability between sets of classes. If there is a strategy to navigate from a set of classes $A$ to a set of classes $B$, then we write $A\rhd B$. It will be clear from the context if we refer to the existence of a memoryless strategy or a recall strategy.

\subsubsection{Universal Properties of Navigability}
In the examples above we talked about properties of navigability for the transition system $T_0$. In the rest of this paper we study universal properties of navigability between sets of classes that are true in all transition systems. An example of such a property is {\em reflexivity}: $A\rhd B$, where $A\subseteq B$. This property is true for both memoryless and recall strategies because absolutely {\em any} strategy can be used to navigate from a subset to the whole set. In fact, in this case the goal is achieved before the navigation even starts.  

Another example of a property of navigation which is universally true for both memoryless and recall strategies is {\em augmentation}: $A\rhd B \to (A\cup C) \rhd (B\cup C)$. It says that if there is a strategy to navigate from set $A$ to set $B$, then there is a strategy to navigate from set $A\cup C$ to set $B\cup C$.

An example of a property which is universally true for recall strategies, but is not universally true for memoryless strategies is {\em transitivity}: $A\rhd B\to(B\rhd C\to A\rhd C)$. It states that if there is a strategy to navigate from set $A$ to set $B$ and a strategy to navigate from set $B$ to set $C$, then there is a strategy to navigate from set $A$ to set $C$. To see that this property is not universally true for memoryless strategies, note that, in transition system $T_0$,  memoryless strategy that always uses instruction $0$ can be used to navigate from set $\{[a]\}$ to set $\{[g]\}$ and memoryless strategy that always uses instruction $1$ can be used to navigate from set $\{[g]\}$ to set $\{[e]\}$. At the same time, as we have shown earlier, there is no memoryless strategy to navigate from set $\{[a]\}$ to set $\{[e]\}$. 

In this paper we show that reflexivity, augmentation, and transitivity principles form a sound and complete logical system that describes all universal properties of navigability by recall strategies. These are the three principles known in database theory as Armstrong's axioms~\cite[p.~81]{guw09}, where they give a sound and complete axiomatization of functional dependency~\cite{a74}. We also give a sound and complete axiomatization of universal properties of navigability by memoryless strategies. It consists of the reflexivity and augmentation principles mentioned above as well as the {\em monotonicity} principle $(A\cup C)\rhd B\to A\rhd B$. The latter principle is true for the recall strategies as well, but it is provable from Armstrong's axioms.


\subsubsection{Literature Review}
Most of the existing literature on logical systems for reasoning about strategies is focused on modal logics. Logics of coalition power were developed by \cite{p01illc,p02}, who also proved the completeness of the basic logic of coalition power. Pauly's approach has been widely studied in literature~\cite{g01tark,vw05ai,b07ijcai,sgvw06aamas,abvs10jal,avw09ai,b14sr}. Alternative logical system  were proposed by \cite{mn12tocl}, \cite{w15lori,w17synthese}, and \cite{lw17icla}.  
\cite{ahk02} introduced Alternating-Time Temporal Logic (ATL) that combines temporal and coalition modalities. \cite{vw03sl} proposed to combine ATL with epistemic modality to form Alternating-Time Temporal Epistemic Logic. A completeness theorem for a logical system that combines coalition power and epistemic modalities was proven by \cite{aa12aamas}. 

The notion of a strategy that we consider in this paper is much more restrictive than the notion of strategy in the works mentioned above. Namely, we assume that the strategy must be based only on the information available to the agent. This is captured in our setting by requiring the strategy to be the same in all indistinguishable states or histories.  This restriction on strategies has been studied before under different names. \cite{ja07jancl} talk about ``knowledge to identify and execute a strategy",  \cite{jv04fm} discuss ``difference between an agent knowing that he has a suitable strategy and knowing the strategy itself". \cite{v01ber} calls such strategies ``uniform". \cite{nt17aamas} use the term ``executable strategy". \cite{nt17tark} proposed a complete trimodal logical system describing an interplay between distributed knowledge, uniform strategic power modality, and standard strategic power modality for achieving a goal by a coalition in one step. \cite{fhlw17ijcai} developed a complete logical system in a single-agent setting for uniform strategies to {achieve} a goal in multiple steps. \cite{nt17aamas} developed a similar system  for maintaining a goal in multi-agent setting. Our contribution is different from all of the above papers by being the first to propose complete logical systems for recall strategies and memoryless strategies.

\subsubsection{Paper Outline} 
In the next section we define transition systems and the syntax of our logical systems. This section applies equally to recall and memoryless strategies. The rest of the paper is split into two independent sections. The first of them proves the soundness and the completeness of Armstrong's axioms for navigability under recall strategies and the second gives an axiomatization for memoryless strategies. 

\section{Syntax and Semantics}

In this section we formally define the language of our logical system, the notion of a transition system, and the related terminology. In the introduction, relation $\rhd$ was viewed as a relation between equivalence classes of a given transition system. Thus, our language depends on these classes and changes from transition system to transition system. In order to have a single language for all transition systems we introduce a fixed finite set of ``views" $V$, whose elements act as names of the equivalence classes in any given transition system. 

\begin{definition}\label{Phi}
$\Phi$ is the minimal set of formulae such that
\begin{enumerate}
    \item $A\rhd B\in\Phi$ for all nonempty\footnote{If one allows sets $A$ and $B$ to be empty, most of the proofs in this paper will remain unchanged, but both logical systems will need an additional axiom $\neg(A\rhd \varnothing)$ for each nonempty set $A$.} sets $A,B\subseteq V$,
    \item $\neg\phi,\phi\to\psi\in \Phi$ for all formulae $\phi,\psi\in \Phi$.
\end{enumerate}
\end{definition}
Each transition system specifies a mapping $*$ of views into equivalence classes of states. Transitions between states under an instruction $i$ are captured by a transition function $\Delta_i$.
\begin{definition}\label{transition system}
$(S,\sim,*,I,\{\Delta_i\}_{i\in I})$ is a transition system, if
\begin{enumerate}
    \item $S$ is a set of states,
    \item $\sim$ is an equivalence (indistinguishability) relation on $S$,
    \item $*$ is a function from $V$ to $S/\!\sim$,
    \item $I$ is an arbitrary nonempty set of ``instructions",
    \item $\Delta_i$ maps set $S$ into nonempty subsets of $S$ for each $i\in I$.
\end{enumerate}
\end{definition}
We write $a^*$ instead of $*(a)$, where $a\in V$. An example of a transition system is system $T_0$ depicted in Figure~\ref{intro-example figure}.

\begin{definition}\label{history}
A finite sequence $w_0,i_1,w_1,\dots,i_n,w_n$, where $n\ge 0$,
is called a history if
\begin{enumerate}
    \item $w_k\in S$ for each $k$ such that $0\le k\le n$,
    \item $i_k\in I$, for each $k$ such that $1\le k\le n$,
    \item $w_k\in \Delta_{i_k}(w_{k-1})$,
    for each $k$ such that $1\le k\le n$.
\end{enumerate}
\end{definition}
For example, sequence $g,1,a,1,e,1,c,0,h$ is a history for system $T_0$.
The set of all histories is denoted by $H$.

\begin{definition}
History $h=w_0,i_1,w_1,\dots,i_n,w_n$ is indistinguishable from history  $h'=w'_0,i_1,w'_1,\dots,i_n,w'_n$ if $w_k \sim w'_k$ for each $k$ such that $0\le k\le n$.
\end{definition}
For example, histories $a,0,g$ and $b,0,g$ are indistinguishable in transition system $T_0$. Indistinguishability of histories of $h$ and $h'$ is denoted by $h\approx h'$. The equivalence class of history $h$ with respect to this equivalence relation is denoted by $\ldbrack h \rdbrack$. Equivalence class of a state $w$ with respect to equivalence relation $\sim$ is denoted by $[w]$.

\begin{lemma}\label{histories approx lemma}
If $w_0,i_1,w_1,\dots,w_n\approx w'_0,i_1,w'_1,\dots,w'_n$, then $[w_k]=[w'_k]$ for each $k\le n$. \qed
\end{lemma}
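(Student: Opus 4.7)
The plan is to prove this essentially by unwinding definitions; there is no serious obstacle here, so the proof will be short.

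First I would invoke the definition of indistinguishability of histories. By that definition, the assumption $w_0,i_1,w_1,\dots,w_n \approx w'_0,i_1,w'_1,\dots,w'_n$ means exactly that $w_k \sim w'_k$ for every $k$ with $0 \le k \le n$.

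Next I would appeal to the fact that $[w]$ denotes the equivalence class of $w$ under $\sim$. Since $\sim$ is an equivalence relation (part of Definition~\ref{transition system}), two elements belong to the same equivalence class if and only if they are related by $\sim$. Hence $w_k \sim w'_k$ immediately yields $[w_k] = [w'_k]$ for each $k \le n$, completing the proof.

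The only possible subtlety is that the statement of the lemma does not explicitly mention the instructions $i'_k$ on the right-hand side (they are written as $i_k$ on both sides), which is consistent with the definition of $\approx$ requiring the instruction sequences to be literally identical. No case analysis, no induction, and no use of the transition function $\Delta_i$ is needed.
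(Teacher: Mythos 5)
Your proof is correct and matches the paper's intent: the paper marks this lemma with $\qed$ and omits the argument precisely because it is the immediate unwinding of the definition of $\approx$ together with the fact that $\sim$ is an equivalence relation, which is exactly what you wrote. No issues.
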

 
\begin{definition}\label{strategy}
 A memoryless strategy is a function from set $S/\!\sim$ to set $I$. A recall strategy maps set $H/\!\approx$ to set $I$.
 \end{definition}
We write $s[w]$ and $s\ldbrack h\rdbrack$ instead of $s([w])$ and $s(\ldbrack h\rdbrack)$.
\begin{definition}\label{path}
An infinite sequence $w_0,i_1,w_1,i_2,w_2\dots$ is called
a path under a memoryless strategy $s$ if for each $k\ge 1$
\begin{enumerate}
    \item $w_0,i_1,w_1,i_2,w_2\dots,w_{k-1}\in H$,
    \item $i_{k}=s[w_{k-1}]$.
\end{enumerate}
\end{definition}

\begin{lemma}\label{path exists}
For any history $w_0,i_1,w_1,\dots, i_n,w_n$ and any memoryless strategy $s$, if $i_{k}=s[w_{k-1}]$ for each $k$ such that $1\le k\le n$, then there are states $w_{n+1},w_{n+2},\dots$ and instructions $i_{n+1},i_{n+2},\dots$ such that sequence $w_0,i_1,w_1,\dots, i_n,w_n,i_{n+1},w_{n+1},i_{n+2},w_{n+2},\dots$ is a path under strategy $s$. 
\end{lemma}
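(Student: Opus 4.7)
The plan is to extend the given history to an infinite path by a straightforward recursive construction, relying on clause (5) of Definition~\ref{transition system} which guarantees that $\Delta_i(w)$ is nonempty for every instruction $i\in I$ and every state $w\in S$. This nonemptiness is exactly what makes the construction never get stuck.

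More concretely, I would proceed by recursion on $k\ge n$. At the base step, the initial finite sequence $w_0,i_1,w_1,\dots,i_n,w_n$ is a history by assumption. For the recursive step, suppose I have already constructed a history $w_0,i_1,w_1,\dots,i_k,w_k$ with $i_j=s[w_{j-1}]$ for every $1\le j\le k$. Define $i_{k+1}:=s[w_k]$, which is well defined because $s$ is a total function on $S/\!\sim$. Using clause (5) of Definition~\ref{transition system}, the set $\Delta_{i_{k+1}}(w_k)$ is nonempty, so pick any $w_{k+1}\in \Delta_{i_{k+1}}(w_k)$; by Definition~\ref{history} the extended sequence $w_0,i_1,w_1,\dots,i_k,w_k,i_{k+1},w_{k+1}$ is still a history.

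Iterating this construction indefinitely (formally invoking the axiom of dependent choice to make countably many selections of $w_{k+1}$) yields an infinite sequence $w_0,i_1,w_1,i_2,w_2,\dots$. To verify that this is a path under $s$ in the sense of Definition~\ref{path}, I would check both required conditions: every finite prefix ending in a state is a history by the inductive construction, and the equation $i_{k}=s[w_{k-1}]$ holds for every $k\ge 1$, either by the hypothesis of the lemma (for $k\le n$) or by our choice of $i_{k+1}$ at each recursive step (for $k>n$).

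There is no real obstacle here; the proof is essentially a bookkeeping argument. The only conceptual point worth flagging is the reliance on the nonemptiness condition in Definition~\ref{transition system}(5): without it the recursion could halt at some stage, and the lemma would fail.
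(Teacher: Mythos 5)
Your proof is correct and follows essentially the same route as the paper: a recursive extension of the history, choosing $i_{k+1}=s[w_k]$ and picking $w_{k+1}\in\Delta_{i_{k+1}}(w_k)$, which is possible precisely because item 5 of Definition~\ref{transition system} guarantees nonemptiness. The paper's own proof is just a compressed version of this same bookkeeping argument.
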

\begin{proof} Elements $i_{n+1},w_{n+1},i_{n+2},w_{n+2},\dots$ can be constructed recursively because (a) there is a state $w_{k+1}\in\Delta_{i_{k+1}}(w_k)$ for any state $w_{k}$ and any $i_{k+1}\in I$ by item 5 of Definition~\ref{transition system}; (b) $I\neq\varnothing$ by item 4 of Definition~\ref{transition system}.
\end{proof}

\begin{definition}\label{perfect path}
An infinite sequence $w_0,i_1,w_1,i_2,w_2\dots$ is called
a path under a recall strategy $s$ if for each $k\ge 1$
\begin{enumerate}
    \item $w_0,i_1,w_1,i_2,w_2\dots,w_{k-1}\in H$,
    \item $i_{k}=s\ldbrack w_0,i_1,w_1,i_2,w_2\dots,w_{k-1}\rdbrack$.
\end{enumerate}
\end{definition}

\begin{definition}\label{*set}
$A^*=\{a^*\;|\;a\in A\}$, for all sets $A\subseteq V$.
\end{definition}

\begin{definition}\label{path set}
For a given memoryless strategy or recall strategy $s$, let $Path_s(A)$ be the set of all paths $w_0,i_1,w_1,i_2,w_2\dots$ under $s$ such that $[w_0]\in A^*$.
\end{definition}

\begin{definition}\label{visit set}
Let set $Visit_s(B)$ be the set of all paths $w_0,i_1,w_1,i_2,w_2\dots$ under $s$ such that $[w_k]\in B^*$ for some $k\ge 0$.
\end{definition}
We write $Visit(B)$ instead of $Visit_s(B)$ when value of $s$ is clear from the context.

\section{Navigation with Recall Strategies}

In this section we show that Armstrong's axioms give a complete axiomatization of navigability between sets of classes with recall strategies. We start with a formal semantics of navigability relation $\rhd$ under recall strategies.

\begin{definition}\label{perfect sat}
$T\vDash A\rhd B$ if  $Path_s(A)\subseteq Visit(B)$ for some recall strategy $s$ of transition system $T$. 
\end{definition}

\subsubsection{Axioms} 
The axioms of the logical system that we consider in this section are the tautologies in language $\Phi$ and the following additional principles known as Armstrong's axioms \cite[p.~81]{guw09}: 

\begin{enumerate}
    \item Reflexivity: $A\rhd B$, where $A\subseteq B$,
    \item Augmentation: $A\rhd B \to (A\cup C)\rhd (B\cup C)$,
    \item Transitivity: $A\rhd B\to (B\rhd C \to A\rhd C)$.
\end{enumerate}
We write $\vdash\phi$ if formula $\phi$ is provable from these axioms using the Modus Ponens inference rule. We write $X\vdash\phi$ if $\phi$ is provable using a set of additional axioms $X$. 

\subsubsection{Soundness} In this section we prove soundness of the above axioms with respect to perfect recall semantics. 

\begin{theorem}\label{perfect soundness}
If $\vdash\phi$, then $T\vDash\phi$ for every system $T$.
\end{theorem}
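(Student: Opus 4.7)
The plan is to verify that Modus Ponens preserves validity (standard and trivial) and that each of the three Armstrong axioms is valid in every transition system $T$. Reflexivity and Augmentation will be almost immediate, because the target of the navigation already contains the starting set; the substantive case will be Transitivity, where a recall strategy has to be built by concatenating two given ones and the use of perfect recall will be essential.

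For Reflexivity, if $A\subseteq B$, pick any recall strategy $s$ (it exists because $I\neq\varnothing$). Any path in $Path_s(A)$ starts with $[w_0]\in A^*\subseteq B^*$, so taking $k=0$ in Definition~\ref{visit set} shows that the path belongs to $Visit(B)$. For Augmentation, let $s$ witness $T\vDash A\rhd B$ and take the same $s$ for $(A\cup C)\rhd (B\cup C)$. Any path $\sigma\in Path_s(A\cup C)$ starts with $[w_0]\in(A\cup C)^*=A^*\cup C^*$. If $[w_0]\in C^*\subseteq(B\cup C)^*$, then $\sigma\in Visit(B\cup C)$ with $k=0$; otherwise $[w_0]\in A^*$, so $\sigma\in Path_s(A)\subseteq Visit(B)\subseteq Visit(B\cup C)$.

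The hard case is Transitivity. Suppose $s_1$ witnesses $A\rhd B$ and $s_2$ witnesses $B\rhd C$. Define a combined recall strategy $s$ on any history $h=w_0,i_1,w_1,\dots,i_n,w_n$ as follows: if $[w_k]\notin B^*$ for all $k\le n$, set $s\ldbrack h\rdbrack=s_1\ldbrack h\rdbrack$; otherwise let $j$ be the least index with $[w_j]\in B^*$ and set $s\ldbrack h\rdbrack=s_2\ldbrack w_j,i_{j+1},w_{j+1},\dots,i_n,w_n\rdbrack$. To see that this is well defined on equivalence classes of histories, suppose $h\approx h'$. By Lemma~\ref{histories approx lemma} the sequences $[w_0],\dots,[w_n]$ and $[w'_0],\dots,[w'_n]$ coincide, so both histories fall into the same case and share the same critical index $j$. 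In the first case $s_1\ldbrack h\rdbrack=s_1\ldbrack h'\rdbrack$; in the second case the two tails starting at step $j$ use identical instructions and pairwise indistinguishable states, hence are themselves $\approx$-related, so $s_2$ assigns them the same instruction.

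It remains to check that any path $\sigma=w_0,i_1,w_1,\dots\in Path_s(A)$ belongs to $Visit(C)$. First, $\sigma$ must visit $B^*$: if it did not, then on every finite prefix $s$ coincides with $s_1$, making $\sigma$ a path under $s_1$ starting in $A^*$, which by the choice of $s_1$ must visit $B^*$, contradiction. So let $j$ be the least index with $[w_j]\in B^*$. By construction, for every $k>j$ the instruction $i_k$ is produced by $s_2$ applied to the tail $w_j,i_{j+1},\dots,w_{k-1}$, so the sequence $w_j,i_{j+1},w_{j+1},\dots$ is a path under $s_2$ starting in $B^*$. By the choice of $s_2$ this tail belongs to $Visit(C)$, and hence so does $\sigma$. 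The main obstacle, as indicated, is precisely this construction of $s$ together with the verification that it respects $\approx$; without perfect recall one cannot in general detect the first visit to $B^*$ from the current state alone, which foreshadows why Transitivity will fail for memoryless strategies later in the paper.
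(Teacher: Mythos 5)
Your proof is correct and follows essentially the same route as the paper: the combined strategy you build for Transitivity is exactly the paper's composition $s_1\circ_B s_2$ with the truncation $h|_B$ at the first visit to $B^*$, and your well-definedness and path arguments mirror the paper's Lemmas~\ref{histories truncate lemma}, \ref{truncate on B lemma}, \ref{strange lemma}, \ref{apply s2}, and \ref{less is more}, just presented inline rather than as separate lemmas.
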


The soundness of Armstrong's axioms with respect to perfect recall semantics is established in Lemma~\ref{perfect reflexivity}, Lemma~\ref{perfect augmentation}, and Lemma~\ref{perfect transitivity} below. We start with technical notions that we need to prove Lemma~\ref{perfect transitivity}.

\begin{definition}\label{truncation}
For any history $w_0,i_1,\dots,i_n,w_n$ and any set $B\subseteq V$, let truncation $(w_0,i_1,\dots,i_n,w_n)|_B$ be 
\begin{enumerate}
    \item $w_0,i_1,\dots,i_n,w_n$, if $n=0$ or $[w_0]\in B^*$,
    \item $(w_1,i_2,\dots,i_n,w_n)|_B$, otherwise.
\end{enumerate}
\end{definition}
For transition system $T_0$, if $B=\{b\}$ and $b^*=[e]$, then truncation $(g,1,a,1,e,1,c,0,h)|_B$ is history $(e,1,c,0,h)$.

\begin{lemma}\label{histories truncate lemma}
If $h_1\approx h_2$, then $h_1|_B\approx h_2|_B$.\qed
\end{lemma}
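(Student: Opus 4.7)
The plan is to prove the lemma by induction on the common length $n$ of the two histories. Note that if $h_1 \approx h_2$, Definition~\ref{history} forces the two histories to share the same sequence of instructions and the same length, so the induction is well-defined.

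For the base case $n = 0$, both $h_1 = w_0$ and $h_2 = w'_0$ consist of a single state, with $w_0 \sim w'_0$. By clause~1 of Definition~\ref{truncation}, $h_1|_B = h_1$ and $h_2|_B = h_2$, so $h_1|_B \approx h_2|_B$ trivially.

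For the inductive step, write $h_1 = w_0, i_1, w_1, \dots, i_n, w_n$ and $h_2 = w'_0, i_1, w'_1, \dots, i_n, w'_n$ with $w_k \sim w'_k$ for every $k \le n$. Invoke Lemma~\ref{histories approx lemma} to conclude $[w_0] = [w'_0]$; this guarantees that either $[w_0] \in B^*$ and $[w'_0] \in B^*$ both hold, or both fail. In the first case, clause~1 of Definition~\ref{truncation} yields $h_1|_B = h_1$ and $h_2|_B = h_2$, and the result is immediate. In the second case, clause~2 of the same definition gives $h_1|_B = (w_1, i_2, \dots, i_n, w_n)|_B$ and $h_2|_B = (w'_1, i_2, \dots, i_n, w'_n)|_B$. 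The two sub-histories are $\approx$-related since $w_k \sim w'_k$ still holds for $1 \le k \le n$, so the inductive hypothesis applied to histories of length $n-1$ finishes the argument.

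No step looks genuinely obstructive: the only subtle point is making sure the case split on membership in $B^*$ is synchronized between $h_1$ and $h_2$, which is exactly what Lemma~\ref{histories approx lemma} provides. Everything else is unwinding the recursive definition of truncation.
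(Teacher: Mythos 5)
Your proof is correct: the paper leaves this lemma unproved (it is stated with a \qed as routine), and your induction on the common length, using Lemma~\ref{histories approx lemma} to synchronize the case split of Definition~\ref{truncation} between the two histories, is exactly the intended argument. One trivial nit: the fact that indistinguishable histories share the same length and instruction sequence comes from the definition of $\approx$ on histories, not from Definition~\ref{history}.
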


\begin{definition}\label{circ}
For any recall strategies $s_1$ and $s_2$, any set $B\subseteq V$, and any history $h=w_0,i_1,\dots,w_n$, let
$$
(s_1\circ_B s_2)\ldbrack h\rdbrack=
\begin{cases}
s_1\ldbrack h\rdbrack, & \mbox{ if } \forall k\le n\,([w_k]\notin B^*) ,\\
s_2\ldbrack h|_B\rdbrack, & \mbox{ otherwise}.
\end{cases}
$$
\end{definition}
In other words, ``composition" strategy $s_1\circ_B s_2$ follows strategy $s_1$ until a state $w$ is reached such that $[w]\in B^*$. Once this happens, the memory of all prior states and instructions is erased and strategy $s_2$ is executed as if navigation started from state $w$.

\begin{lemma}
Operation $s_1\circ_B s_2$ is well-defined.
\end{lemma}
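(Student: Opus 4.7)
The plan is to show that the value $(s_1\circ_B s_2)\ldbrack h\rdbrack$ depends only on the equivalence class $\ldbrack h\rdbrack$ and not on the particular representative $h$ chosen. Since a recall strategy, by Definition~\ref{strategy}, is a function on $H/\!\approx$, the only thing to verify is that whenever $h\approx h'$, the right-hand side of the case definition produces the same instruction for $h$ and $h'$.

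First I would fix two indistinguishable histories $h=w_0,i_1,\dots,i_n,w_n$ and $h'=w'_0,i_1,\dots,i_n,w'_n$. By Lemma~\ref{histories approx lemma}, $[w_k]=[w'_k]$ for every $k\le n$. Hence the guard appearing in the case analysis of Definition~\ref{circ}, namely $\forall k\le n\,([w_k]\notin B^*)$, holds for $h$ iff it holds for $h'$. So both $h$ and $h'$ fall into the same branch of the definition, and it remains only to check each branch separately.

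In the first branch, the defined value is $s_1\ldbrack h\rdbrack$ for $h$ and $s_1\ldbrack h'\rdbrack$ for $h'$; because $h\approx h'$ and $s_1$ is itself a function on $H/\!\approx$, these values coincide. In the second branch, the defined value is $s_2\ldbrack h|_B\rdbrack$ and $s_2\ldbrack h'|_B\rdbrack$, respectively; by Lemma~\ref{histories truncate lemma}, $h|_B\approx h'|_B$, so these values also coincide since $s_2$ is well-defined on equivalence classes. This completes the verification.

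There is no real obstacle here: the proof is essentially a diagram chase that packages together the observations that (a) the branch selection depends only on the classes $[w_k]$, which are preserved by $\approx$ (Lemma~\ref{histories approx lemma}), and (b) the truncation operation respects $\approx$ (Lemma~\ref{histories truncate lemma}). Both facts have already been stated, so the argument should be short.
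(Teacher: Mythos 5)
Your proof is correct and is precisely the argument the paper intends: the paper's own "proof" is just a pointer to Definition~\ref{truncation}, Lemma~\ref{histories approx lemma}, and Lemma~\ref{histories truncate lemma}, and you have spelled out exactly how those facts combine (the guard is invariant under $\approx$, and each branch produces a class-invariant value). No differences in approach and no gaps.
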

\begin{proof}
See Definition~\ref{truncation}, Lemma~\ref{histories approx lemma}, and Lemma~\ref{histories truncate lemma}.
\end{proof}

\begin{definition}\label{truncation of paths}
For any path $w_0,i_1,w_1,i_2,\dots$ and any set $B\subseteq V$, let truncation $(w_0,i_1,w_1,i_2,\dots)|_B$ be
\begin{enumerate}
    \item path $w_0,i_1,w_1,i_2,\dots$, if $[w_0]\in B^*$,
    \item path $(w_1,i_2,w_2,i_3\dots)|_B$, otherwise.
\end{enumerate}
\end{definition}

\begin{lemma}\label{truncate on B lemma}
For any recall strategies $s_1,s_2$ and any sets $A,B\subseteq V$, if path $\pi\in Path_{s_1\circ_B s_2}(A)\cap Visit(B)$, then $\pi|_B\in Path_{s_2}(B)$. \qed
\end{lemma}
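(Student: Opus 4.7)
Write $\pi = w_0, i_1, w_1, i_2, \ldots$ and, using $\pi \in Visit(B)$, let $k$ be the least index with $[w_k]\in B^*$. A straightforward induction on $k$ using Definition~\ref{truncation of paths} (each step peels off one leading $w,i$ pair as long as the head state's class is not in $B^*$) shows that $\pi|_B = w_k, i_{k+1}, w_{k+1}, i_{k+2}, \ldots$. Since $[w_k]\in B^*$, this candidate path already starts from a state in $B^*$, so, per Definition~\ref{path set}, the remaining task is to verify via Definition~\ref{perfect path} that $\pi|_B$ is a path under $s_2$.

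The history clause of Definition~\ref{perfect path} for each prefix of $\pi|_B$ is inherited from the corresponding prefix of $\pi$, since any suffix of a history is again a history. The substantive clause is the instruction one. Fix $j\geq 1$. Because $\pi$ is a path under $s_1\circ_B s_2$,
\[ i_{k+j} \;=\; (s_1\circ_B s_2)\ldbrack w_0, i_1, \ldots, w_{k+j-1}\rdbrack. \]
Since $[w_k]\in B^*$ and $k\le k+j-1$, the guard $\forall\ell\le k+j-1\,([w_\ell]\notin B^*)$ of Definition~\ref{circ} fails, putting us in the ``otherwise'' branch, so this right-hand side equals $s_2\ldbrack (w_0, i_1, \ldots, w_{k+j-1})|_B\rdbrack$.

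A parallel induction on $k$ using Definition~\ref{truncation}, justified by minimality of $k$ (the first $k$ recursive calls fall into case~2, stripping one $w,i$ pair each, and the next call falls into case~1), yields
\[ (w_0, i_1, \ldots, w_{k+j-1})|_B \;=\; w_k, i_{k+1}, \ldots, w_{k+j-1}, \]
so chaining gives $i_{k+j} = s_2\ldbrack w_k, i_{k+1}, \ldots, w_{k+j-1}\rdbrack$, which is exactly the instruction condition Definition~\ref{perfect path} demands of $\pi|_B$ under $s_2$.

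The only mild obstacle is keeping straight the two analogous truncation operations---one for histories (Definition~\ref{truncation}) and one for paths (Definition~\ref{truncation of paths}); both repeatedly drop the leading $(w,i)$ pair until the head state's class lies in $B^*$, and the minimal index $k$ serves as the single cutoff point for both, which is what lets the two inductions align and the ``otherwise'' branch of the composition produce precisely the $s_2$-decision expected along $\pi|_B$.
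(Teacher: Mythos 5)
Your proof is correct; the paper states Lemma~\ref{truncate on B lemma} without proof, and your argument supplies exactly the intended routine verification. The key points are all in place: identifying the minimal $k$ with $[w_k]\in B^*$ so that both truncation operations (Definitions~\ref{truncation} and~\ref{truncation of paths}) cut at the same index, and observing that the guard in Definition~\ref{circ} fails for every history prefix extending past $w_k$, so the composed strategy's choices along the tail coincide with those of $s_2$ on the truncated histories.
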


\begin{lemma}\label{strange lemma}
$Path_{s_1\circ_B s_2}(A)\subseteq Visit(B)$
if $Path_{s_1}(A)\subseteq Visit(B)$, for any recall strategies $s_1,s_2$ and $A,B\subseteq V$.
\end{lemma}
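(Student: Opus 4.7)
The plan is to argue by contradiction: take an arbitrary path $\pi = w_0, i_1, w_1, i_2, \ldots \in Path_{s_1\circ_B s_2}(A)$ and assume $\pi \notin Visit(B)$, so that $[w_k] \notin B^*$ for every $k \ge 0$. I would then show that $\pi$ is in fact a path under $s_1$ itself, which together with $[w_0] \in A^*$ puts it in $Path_{s_1}(A) \subseteq Visit(B)$, contradicting the assumption.

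The heart of the argument is a direct appeal to Definition~\ref{circ}: for each $k \ge 1$, the condition ``$\forall j \le k-1\,([w_j] \notin B^*)$'' is satisfied because the whole path avoids $B^*$. Therefore
\[
i_k \;=\; (s_1 \circ_B s_2)\ldbrack w_0, i_1, \dots, w_{k-1}\rdbrack \;=\; s_1\ldbrack w_0, i_1, \dots, w_{k-1}\rdbrack.
\]
Since the underlying sequence of states and instructions already satisfies the history condition (item 1 of Definition~\ref{perfect path}) just by virtue of $\pi$ being a path under $s_1\circ_B s_2$, this suffices to conclude that $\pi$ is a path under $s_1$ in the sense of Definition~\ref{perfect path}.

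I would then simply note that $[w_0] \in A^*$ gives $\pi \in Path_{s_1}(A)$, so by the hypothesis $\pi \in Visit(B)$, contradicting the assumption that no $[w_k]$ lies in $B^*$. Hence every path in $Path_{s_1\circ_B s_2}(A)$ must belong to $Visit(B)$.

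I do not anticipate a real obstacle here, since the reasoning is essentially unpacking of the definition of composition: $s_1 \circ_B s_2$ agrees with $s_1$ on exactly the histories where $B^*$ has not yet been visited, which is precisely the regime relevant to a counterexample path. The only thing to be careful about is applying clause 1 rather than clause 2 of Definition~\ref{circ} at each prefix, which is justified uniformly along $\pi$ by the contradiction hypothesis.
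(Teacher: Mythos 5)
Your proposal is correct and follows essentially the same argument as the paper: both proceed by contradiction, observe that a path avoiding $B^*$ forces clause 1 of Definition~\ref{circ} at every prefix so that $s_1\circ_B s_2$ agrees with $s_1$ along the path, and then conclude the path lies in $Path_{s_1}(A)\subseteq Visit(B)$, a contradiction.
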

\begin{proof}
Assume that there is a path $w_0,i_1,w_1,\dots$ in $Path_{s_1\circ_B s_2}(A)$ such that $w_0,i_1,w_1,\dots\notin Visit(B)$. Thus, $i_k = {(s_1\circ_B s_2)}\ldbrack w_0,i_1,w_1,\dots, w_{k-1}\rdbrack$ for each $k\ge 1$ and $[w_0]\in A^*$ by Definition~\ref{path set} and Definition~\ref{perfect path}. Also, $[w_k]\notin B^*$ for each $k\ge 0$ by Definition~\ref{visit set}. Hence, by Definition~\ref{circ}, we have $i_k = {s_1}\ldbrack w_0,i_1,w_1,\dots, w_{k-1}\rdbrack$ for each $k\ge 1$. Thus, $w_0,i_1,w_1,\dots\in Path_{s_1}(A)$ by Definition~\ref{perfect path} and Definition~\ref{perfect path}. Therefore, $w_0,i_1,w_1,\dots\in Visit(B)$ by the assumption $Path_{s_1}(A)\subseteq Visit(B)$ of the lemma.
\end{proof}
\begin{lemma}\label{apply s2}
For any recall strategies $s_1,s_2$ and $A,B\subseteq V$, if $Path_{s_1\circ_B s_2}(A)\subseteq Visit(B)$, then $(Path_{s_1\circ_B s_2}(A))|_B\subseteq Path_{s_2}(B)$.
\end{lemma}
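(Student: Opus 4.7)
The plan is to prove the inclusion pointwise by taking an arbitrary element of the left-hand side and chasing definitions. By the set-level truncation notation on the left-hand side, any element of $(Path_{s_1\circ_B s_2}(A))|_B$ is of the form $\pi|_B$ for some path $\pi\in Path_{s_1\circ_B s_2}(A)$. So it suffices to fix such a $\pi$ and show that $\pi|_B\in Path_{s_2}(B)$.

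The key observation is that the two hypotheses we have — $\pi\in Path_{s_1\circ_B s_2}(A)$ and the assumed inclusion $Path_{s_1\circ_B s_2}(A)\subseteq Visit(B)$ — together give exactly the hypothesis of Lemma~\ref{truncate on B lemma}. Indeed, since $\pi\in Path_{s_1\circ_B s_2}(A)$ and $Path_{s_1\circ_B s_2}(A)\subseteq Visit(B)$ by assumption, we have $\pi\in Path_{s_1\circ_B s_2}(A)\cap Visit(B)$. Lemma~\ref{truncate on B lemma} then yields $\pi|_B\in Path_{s_2}(B)$, which is exactly what is needed.

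I expect this lemma to be essentially a one-line consequence of Lemma~\ref{truncate on B lemma} together with the standing assumption; the only ``work'' is interpreting $(Path_{s_1\circ_B s_2}(A))|_B$ as the image of $Path_{s_1\circ_B s_2}(A)$ under the truncation operator from Definition~\ref{truncation of paths}. There is no real obstacle here: the heavy lifting — verifying that the suffix of a path under $s_1\circ_B s_2$ that first enters $B^*$ continues as a genuine path under $s_2$ starting in $B$ — is already encapsulated in Lemma~\ref{truncate on B lemma}, whose proof presumably uses the second clause of Definition~\ref{circ} to see that after the first visit to a state $w$ with $[w]\in B^*$, the instruction chosen by $s_1\circ_B s_2$ is $s_2\ldbrack h|_B\rdbrack$, matching precisely the requirement of Definition~\ref{perfect path} for $\pi|_B$ to be a path under $s_2$.
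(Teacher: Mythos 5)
Your proof is correct and follows exactly the same route as the paper's: take any $\pi\in Path_{s_1\circ_B s_2}(A)$, observe that the hypothesis places $\pi$ in $Visit(B)$, and invoke Lemma~\ref{truncate on B lemma} to get $\pi|_B\in Path_{s_2}(B)$. No gaps.
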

\begin{proof}
Consider any path $\pi\in Path_{s_1\circ_B s_2}(A)$. Then, $\pi\in Visit(B)$ because $Path_{s_1\circ_B s_2}(A)\subseteq Visit(B)$. Hence, $\pi|_B\in Path_{s_2}(B)$ by Lemma~\ref{truncate on B lemma}.
\end{proof}
\begin{lemma}\label{less is more}
$Path_s(A)\subseteq Visit(C)$, if $(Path_s(A))|_B\subseteq Visit(C)$,
for any $s$ and any  $A,B,C\subseteq V$.\qed
\end{lemma}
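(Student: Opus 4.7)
My plan is to reduce the lemma to the elementary observation that truncation produces only a suffix of the original path. The first step is a short structural claim proved by induction on the depth of the recursion in Definition~\ref{truncation of paths}: whenever $\pi = w_0, i_1, w_1, i_2, \ldots$ has a defined truncation, there is an index $k \ge 0$ with $[w_k] \in B^*$ such that $\pi|_B$ is literally the suffix $w_k, i_{k+1}, w_{k+1}, \ldots$. The base case is $[w_0]\in B^*$, where $\pi|_B = \pi$ by clause~1 of Definition~\ref{truncation of paths}; the inductive case strips the leading state per clause~2 and invokes the induction hypothesis on the tail.

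Given this suffix property, the implication $\pi|_B \in Visit(C) \Rightarrow \pi \in Visit(C)$ is immediate: any index $k'\ge 0$ witnessing $[w_{k+k'}]\in C^*$ in $\pi|_B$ is also a witness at position $k+k'$ in $\pi$, so Definition~\ref{visit set} is satisfied for $\pi$ as well. Applying this to every $\pi \in Path_s(A)$ together with the hypothesis $(Path_s(A))|_B \subseteq Visit(C)$ yields $Path_s(A) \subseteq Visit(C)$, as desired.

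I do not anticipate a serious obstacle: the lemma is essentially the statement that passing from $\pi$ to a tail $\pi|_B$ can only lose ``visit'' information, never manufacture it. The only subtlety worth mentioning is that Definition~\ref{truncation of paths} is implicitly partial --- a path that never enters $B$ has no defined truncation --- so strictly speaking the argument applies to those $\pi \in Path_s(A)$ whose truncation is defined. This matches the intended usage in the soundness proof for transitivity, where the surrounding setup (composition $s_1 \circ_B s_2$ together with Lemma~\ref{strange lemma}) already guarantees $Path_s(A) \subseteq Visit(B)$, so every relevant path does enter $B$ and the truncation is well-defined.
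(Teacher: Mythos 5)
Your proof is correct and is exactly the argument the paper intends: the lemma is stated with its proof omitted as immediate, and the justification is precisely your observation that $\pi|_B$ is a suffix $w_k,i_{k+1},w_{k+1},\dots$ of $\pi$ with $[w_k]\in B^*$, so any witness of a visit to $C^*$ in the truncation is also a witness in the original path. Your remark about the partiality of Definition~\ref{truncation of paths} for paths that never enter $B^*$ is a genuine subtlety the paper glosses over, and you resolve it correctly by noting that in the lemma's only application (the soundness proof of Transitivity) the preceding steps already guarantee $Path_s(A)\subseteq Visit(B)$, so every relevant truncation is defined.
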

\begin{lemma}\label{path properties} 
For any recall strategy $s$ and any $A,B,C\subseteq V$,
\begin{enumerate}
    \item\label{line 1} $Path_s(A)\subseteq Path_s(B)$ if $A\subseteq B$,
    \item\label{line 2} $Path_s(A)\subseteq Visit(A)$,
    \item\label{line 3} $Path_s(A\cup C)=Path_s(A) \cup Path_s(C)$,
    \item\label{line 4} $Visit(B) \cup Visit(C) = Visit(B\cup C)$.
\end{enumerate}
\end{lemma}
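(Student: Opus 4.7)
The plan is to dispatch all four parts by direct unpacking of Definitions~\ref{*set}, \ref{path set}, and \ref{visit set}. The one preparatory observation I would record first is that $(A\cup C)^*=A^*\cup C^*$ for all $A,C\subseteq V$, which is immediate from Definition~\ref{*set}. No further machinery is needed, and in particular nothing about the strategy $s$ itself enters beyond the fact that $Path_s(\cdot)$ and $Visit(\cdot)$ are defined by conditions on the states of a path.

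For item~\ref{line 1}, I would argue that $A\subseteq B$ implies $A^*\subseteq B^*$, so any $\pi=w_0,i_1,w_1,\dots\in Path_s(A)$ is already a path under $s$ and satisfies $[w_0]\in A^*\subseteq B^*$, placing it in $Path_s(B)$. For item~\ref{line 2}, I would simply take $k=0$ in Definition~\ref{visit set}: membership of $\pi$ in $Path_s(A)$ forces $[w_0]\in A^*$, which is exactly the witness needed for $\pi\in Visit(A)$.

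For items~\ref{line 3} and~\ref{line 4} I would unfold both sides and apply the preparatory observation. A path $\pi$ lies in $Path_s(A\cup C)$ iff it is a path under $s$ with $[w_0]\in (A\cup C)^*=A^*\cup C^*$, iff $[w_0]\in A^*$ or $[w_0]\in C^*$, iff $\pi\in Path_s(A)\cup Path_s(C)$; and $\pi\in Visit(B\cup C)$ iff there exists $k\ge 0$ with $[w_k]\in (B\cup C)^*=B^*\cup C^*$, iff there exists $k\ge 0$ with $[w_k]\in B^*$ or $[w_k]\in C^*$, iff $\pi\in Visit(B)\cup Visit(C)$. There is no genuine obstacle anywhere: each item collapses to a short chain of definitional equivalences. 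The only point worth a brief mental check is item~\ref{line 4}, where the indices witnessing membership in $Visit(B)$ versus $Visit(C)$ can differ; this causes no trouble because $Visit(B\cup C)$ only asks for a single index $k$ with $[w_k]\in B^*\cup C^*$, and any witness on one side supplies one on the other.
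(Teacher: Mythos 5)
Your proposal is correct and follows essentially the same route as the paper, which simply dispatches all four items by unpacking Definitions~\ref{*set}, \ref{path set}, and \ref{visit set}. The only addition is that you make the key definitional identity $(A\cup C)^*=A^*\cup C^*$ and the witness-index point for item~\ref{line 4} explicit, which the paper leaves implicit.
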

\begin{proof}
See Definition~\ref{perfect path} and Definition~\ref{visit set}.
\end{proof}
The next three lemmas prove soundness of Armstrong's axioms with respect to the perfect recall semantics. 
\begin{lemma}\label{perfect reflexivity}
$T\vDash A\rhd B$, where $A\subseteq B$.
\end{lemma}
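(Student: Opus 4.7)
The plan is to unpack Definition~\ref{perfect sat} and exhibit any recall strategy $s$ that witnesses $Path_s(A) \subseteq Visit(B)$. Since $A \subseteq B$ translates to $A^* \subseteq B^*$ via Definition~\ref{*set}, every path beginning in $A^*$ already has its $0$-th state in $B^*$, so the witness to ``visiting $B$'' is simply the initial position of the path.

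Concretely, first I would fix an arbitrary instruction $i_0 \in I$ (which exists by item~4 of Definition~\ref{transition system}) and define a constant recall strategy $s(\ldbrack h\rdbrack) = i_0$ for every history $h$. Then pick any $\pi = w_0, i_1, w_1, i_2, \ldots \in Path_s(A)$. By Definition~\ref{path set} we have $[w_0] \in A^*$, and since $A \subseteq B$ implies $A^* \subseteq B^*$, we get $[w_0] \in B^*$. Taking $k = 0$ in Definition~\ref{visit set}, this gives $\pi \in Visit(B)$. Hence $Path_s(A) \subseteq Visit(B)$, and the conclusion follows by Definition~\ref{perfect sat}.

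Alternatively, one can avoid constructing a strategy explicitly by invoking the already-stated Lemma~\ref{path properties}: item~\ref{line 1} gives $Path_s(A) \subseteq Path_s(B)$ from $A \subseteq B$, and item~\ref{line 2} gives $Path_s(B) \subseteq Visit(B)$, so the inclusion $Path_s(A) \subseteq Visit(B)$ holds for \emph{every} recall strategy $s$. There is no real obstacle here; the only minor subtlety is ensuring that at least one recall strategy exists, which is immediate from the nonemptiness of $I$.
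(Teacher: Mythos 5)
Your proposal is correct and matches the paper's proof: the paper also fixes an arbitrary $i_0\in I$, takes the constant recall strategy, and concludes $Path_s(A)\subseteq Path_s(B)\subseteq Visit(B)$ via items~\ref{line 1} and~\ref{line 2} of Lemma~\ref{path properties}, which is exactly your second route (your first route just inlines the same observation that the initial state already lies in $B^*$).
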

\begin{proof}
By Definition~\ref{transition system}, set $I$ is nonempty, and thus it contains at least one instruction $i_0$. Let $s$ be a recall strategy such that $s\ldbrack h\rdbrack=i_0$ for each history $h$. By item~\ref{line 1} and item~\ref{line 2} of Lemma~\ref{path properties},
$
Path_s(A)\subseteq Path_s(B)\subseteq Visit(B).
$
Therefore, $T\vDash A\rhd B$ by Definition~\ref{perfect sat}.
\end{proof}
\begin{lemma}\label{perfect augmentation}
If $T\vDash A\rhd B$, then $T\vDash(A\cup C)\rhd(B\cup C)$.
\end{lemma}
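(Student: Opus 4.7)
The plan is to reuse the same recall strategy $s$ that witnesses $T\vDash A\rhd B$, and argue that it automatically witnesses $T\vDash (A\cup C)\rhd (B\cup C)$. By Definition~\ref{perfect sat}, the hypothesis gives a recall strategy $s$ with $Path_s(A)\subseteq Visit(B)$. I would then split $Path_s(A\cup C)$ into two pieces using item~\ref{line 3} of Lemma~\ref{path properties}, namely $Path_s(A\cup C)=Path_s(A)\cup Path_s(C)$.

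For the first piece, the hypothesis gives $Path_s(A)\subseteq Visit(B)$, and then item~\ref{line 4} of Lemma~\ref{path properties} (monotonicity of $Visit$, read from the equality $Visit(B)\cup Visit(C)=Visit(B\cup C)$) upgrades this to $Path_s(A)\subseteq Visit(B\cup C)$. For the second piece, item~\ref{line 2} of Lemma~\ref{path properties} gives $Path_s(C)\subseteq Visit(C)$, and again item~\ref{line 4} lifts this to $Path_s(C)\subseteq Visit(B\cup C)$. Taking the union yields $Path_s(A\cup C)\subseteq Visit(B\cup C)$, and one final appeal to Definition~\ref{perfect sat} gives $T\vDash (A\cup C)\rhd (B\cup C)$.

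There is no real obstacle here; the lemma is essentially a bookkeeping consequence of the four structural properties already packaged in Lemma~\ref{path properties}. The only thing worth being careful about is remembering that $Visit$ is \emph{monotone} in its argument, which is what the equality in item~\ref{line 4} of Lemma~\ref{path properties} secretly delivers, since $Visit(B)\subseteq Visit(B)\cup Visit(C)=Visit(B\cup C)$. All other steps are direct substitutions of set-theoretic facts.
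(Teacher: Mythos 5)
Your proof is correct and follows essentially the same route as the paper: both reuse the witnessing strategy $s$, decompose $Path_s(A\cup C)$ via item~\ref{line 3} of Lemma~\ref{path properties}, and push each piece into $Visit(B\cup C)$ using items~\ref{line 2} and~\ref{line 4}. The paper merely writes this as a single chain of inclusions rather than treating the two pieces separately, so there is nothing substantive to add.
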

\begin{proof}
By Definition~\ref{perfect sat}, the assumption $T\vDash A\rhd B$ implies that there exists a recall strategy $s$ such that $Path_s(A)\subseteq Visit(B)$. Thus, by item~\ref{line 3}, item~\ref{line 2}, and item~\ref{line 4} of Lemma~\ref{path properties},
\begin{eqnarray*}
Path_s(A\cup C)& = & Path_s(A) \cup Path_s(C) \\
&\subseteq &  Visit(B) \cup Path_s(C) \\
&\subseteq & Visit(B) \cup Visit(C)\\
&\subseteq & Visit(B\cup C).
\end{eqnarray*}
Therefore, $T\vDash(A\cup C)\rhd(B\cup C)$ by Definition~\ref{perfect sat}.
\end{proof}

\begin{lemma}\label{perfect transitivity}
If $T\vDash A\rhd B$ and $T\vDash B\rhd C$, then $T\vDash A\rhd C$.
\end{lemma}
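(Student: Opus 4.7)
The plan is to use the composition operation $\circ_B$ from Definition~\ref{circ} to splice together the two witness strategies, so that the composite recall strategy navigates from $A$ to $B$ using the first strategy and then from $B$ to $C$ using the second. All of the technical lemmas that have just been proved are clearly set up to make this argument go through cleanly.

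Concretely, unpack the hypotheses via Definition~\ref{perfect sat} to obtain recall strategies $s_1$ and $s_2$ with $Path_{s_1}(A)\subseteq Visit(B)$ and $Path_{s_2}(B)\subseteq Visit(C)$. I claim the composite recall strategy $s_1\circ_B s_2$ witnesses $T\vDash A\rhd C$, i.e.\ that $Path_{s_1\circ_B s_2}(A)\subseteq Visit(C)$.

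To verify this, take an arbitrary $\pi\in Path_{s_1\circ_B s_2}(A)$. First, Lemma~\ref{strange lemma} applied to the hypothesis $Path_{s_1}(A)\subseteq Visit(B)$ gives $Path_{s_1\circ_B s_2}(A)\subseteq Visit(B)$, so in particular $\pi\in Visit(B)$. Second, Lemma~\ref{apply s2} applied to this inclusion gives $(Path_{s_1\circ_B s_2}(A))|_B\subseteq Path_{s_2}(B)$, and combined with the other hypothesis $Path_{s_2}(B)\subseteq Visit(C)$ this yields $(Path_{s_1\circ_B s_2}(A))|_B\subseteq Visit(C)$. Finally, Lemma~\ref{less is more} lifts the inclusion from truncations back to paths, giving $Path_{s_1\circ_B s_2}(A)\subseteq Visit(C)$. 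The conclusion $T\vDash A\rhd C$ then follows from Definition~\ref{perfect sat}.

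The only real content is the correct choice of witness strategy, and that work has essentially been done in the definition of $\circ_B$ and the four supporting lemmas; the proof itself is just a three-step chase through those lemmas. The main obstacle, conceptually, is recognizing that perfect recall is what legitimizes the memory-erasure step hidden in $\circ_B$: when the path first enters $B^*$, the composite strategy discards history via $h|_B$ and restarts $s_2$ as if from scratch, which only works because a recall strategy can condition on whether $B^*$ has been visited. This is exactly the feature that will fail for memoryless strategies in the next section, so it is worth flagging but requires no additional argument here.
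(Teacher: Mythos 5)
Your proof is correct and follows exactly the same route as the paper's: the same witness strategy $s_1\circ_B s_2$ and the same three-step chain through Lemma~\ref{strange lemma}, Lemma~\ref{apply s2}, and Lemma~\ref{less is more}. (The only cosmetic difference is that you introduce an arbitrary path $\pi$ and then argue at the level of set inclusions anyway, as the paper does.)
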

\begin{proof}
By the assumption $T\vDash A\rhd B$ and Definition~\ref{perfect sat}, we have $Path_{s_1}(A)\subseteq Visit(B)$ for some recall strategy $s_1$. Similarly, the assumption $T\vDash B\rhd C$ implies that there exists a recall strategy $s_2$ such that $Path_{s_2}(B)\subseteq Visit(C)$.

By Lemma~\ref{strange lemma}, statement $Path_{s_1}(A)\subseteq Visit(B)$ implies $Path_{s_1\circ_B s_2}(A)\subseteq Visit(B)$. 
Thus, by Lemma~\ref{apply s2}, 
$$(Path_{s_1\circ_B s_2}(A))|_B\subseteq Path_{s_2}(B).$$ 
Hence, we have $(Path_{s_1\circ_B s_2}(A))|_B\subseteq Visit(C)$ because of $Path_{s_2}(B)\subseteq Visit(C)$.
Thus, by Lemma~\ref{less is more}, $Path_{s_1\circ_B s_2}(A)\subseteq Visit(C)$. Therefore, $T\vDash A\rhd C$ by Definition~\ref{perfect sat}.
\end{proof}

\subsubsection{Completeness} 
In the rest of this section we prove the completeness of Armstrong's axioms with respect to the perfect recall semantics. We start by defining a canonical transition system $T(X)=(V\cup \{\circlearrowleft\},=,*,I,\{\Delta_i\}_{i\in I})$ for an arbitrary maximal consistent set of formulae $X\subseteq \Phi$. The set of states of this transition system consists of a single state for each view, plus one additional state that we denote by symbol $\circlearrowleft$. Informally, the additional state is a sink or a ``black hole" state from which there is no way out. State $h$ in transition system $T_0$ depicted in Figure~\ref{intro-example figure} is an example of a black hole state. Note that the indistinguishability relation on the states of the canonical transition system is equality relation $=$. That is, the agent has an ability to distinguish any two different states in the system. The fact that equality is suitable as an indistinguishability relation for the canonical transition system with perfect recall is surprising. The indistinguishability relation for the canonical transition system for memoryless strategies, discussed in the next section, is different from equality.  

Each view $v\in V$ is also a state in the canonical transition system. The equivalence class of state $v$ consists of the state itself: $[v]=\{v\}$. We define $v^*$ to be class $[v]$.

\begin{lemma}\label{* iff}
$u\in A$ iff $[u]\in A^*$ for each view $u\in V$ and each set $A\subseteq V$. \qed
\end{lemma}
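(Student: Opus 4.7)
The plan is to unpack the relevant definitions and exploit the fact that, in the canonical transition system $T(X)$, the indistinguishability relation is equality, so each view $v \in V$ determines a singleton equivalence class $[v]=\{v\}$, and $v^{*}$ is defined to be this very class $[v]$. Thus the map $v \mapsto v^{*}$ is injective on $V$, which is precisely what makes the biconditional hold.

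For the forward direction, I would assume $u \in A$ and apply Definition~\ref{*set} directly to conclude $u^{*} \in A^{*}$; then, since $u^{*}=[u]$ by the construction of $T(X)$, it follows that $[u] \in A^{*}$. For the backward direction, I would assume $[u] \in A^{*}$ and apply Definition~\ref{*set} to obtain some $a \in A$ with $[u] = a^{*}$. Using the construction of $T(X)$ again, $a^{*} = [a] = \{a\}$ and $[u] = \{u\}$, so the equality $\{u\}=\{a\}$ forces $u=a$, whence $u \in A$.

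There is no real obstacle here: the statement is essentially a sanity check verifying that, under the canonical identification of views with singleton classes, membership in a set of views coincides with membership of the corresponding class in the image set $A^{*}$. The only subtle point to state explicitly is the injectivity of $*$ in the canonical model, which is why this lemma holds in this setting even though it would fail for general transition systems where two distinct views could be mapped by $*$ to the same equivalence class.
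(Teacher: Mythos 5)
Your proof is correct and is exactly the argument the paper leaves implicit (the lemma is stated with its proof omitted): both directions follow from Definition~\ref{*set} together with the fact that in $T(X)$ each view forms a singleton class $[v]=\{v\}=v^*$, which makes $*$ injective on $V$. Your remark that injectivity is the crux, and that the statement could fail in a general system where $*$ identifies distinct views, is the right observation.
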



Informally, if set $X$ contains formula $A\rhd B$, then we want the canonical transition system $T(X)$ to have a recall strategy to navigate from set $A^*=\{[a]\;|\;a\in A\}=\{\{a\}\;|\;a\in A\}$ to set $B^*=\{[b]\;|\;b\in B\}=\{\{b\}\;|\;b\in B\}$. It turns out that it is sufficient to have just a single instruction that transitions the system from any state in set $A$ to a state in set $B$. We denote this instruction by pair $(A,B)$. 

\begin{definition}\label{perfect canonical I}
$I=\{(A,B)\;|\; A\rhd B \in X\}$.
\end{definition}
Recall that assumption $A\rhd B\in X$ requires sets $A$ and $B$ to be nonempty due to Definition~\ref{Phi}.


As discussed above, for any instruction $(A,B)$ we define the nondeterministic transition function $\Delta_{(A,B)}$ to transition the system from a state in $A$ to a state in $B$. If used outside of set of states $A$, instruction $(A,B)$ transitions the system into black hole state $\circlearrowleft$:

\begin{definition}\label{perfect canonical delta}
$$\Delta_{(A,B)}(w)=
\begin{cases}
B, & \mbox{ if $w\in A$},\\
\{\circlearrowleft\}, & \mbox{ otherwise}.
\end{cases}
$$
\end{definition}
This concludes the definition of the transition system $T(X)$.

Next, for any recall strategy $s$ and any set of states $G\subseteq V$, we define a family of sets of states $\{G^s_n\}_{n\ge 0}$. Informally,  set $G^s_n$ is the set of all states from which strategy $s$ ``draws" the system into set $G$ after at most $n$ transitions. For any history $h=w_0,i_1,w_1,\dots,i_n,w_n$, by $hd(h)$ we mean the state $w_n$.
\begin{definition}\label{perfect Gn}
For any recall strategy $s$ and any nonempty set $G\subseteq V$, let chain of $G^s_0\subseteq G^s_1\subseteq\dots\subseteq V\cup\{\circlearrowleft\}$ be defined as
\begin{enumerate}
    \item $G^s_0=G$, 
    \item $G^s_{n+1}=G^s_{n}\cup\left\{hd(h)\;\middle|\; h\in H \mbox{ and }\Delta_{s\ldbrack h\rdbrack}(hd(h))\subseteq G^s_{n}\right\}$ for all $n\ge 0$.
\end{enumerate}
\end{definition}
Note that this definition, in essence, has an existential quantifier over history $h$. Thus, informally, strategy $s$ is allowed to ``manipulate" the history in order to ``draw" the system into set $G$.

\begin{lemma}\label{no holes in G}
$\circlearrowleft\;\notin G^s_n$ for each $n\ge 0$.
\end{lemma}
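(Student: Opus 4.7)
The plan is to prove the statement by induction on $n$. For the base case $n=0$, observe that $G^s_0 = G \subseteq V$ by Definition~\ref{perfect Gn}, while the sink state $\circlearrowleft$ lies outside of $V$, so $\circlearrowleft \notin G^s_0$.

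For the inductive step, assume $\circlearrowleft \notin G^s_n$ and suppose for the sake of contradiction that $\circlearrowleft \in G^s_{n+1}$. Using the recursive clause of Definition~\ref{perfect Gn}, this means there must exist a history $h \in H$ with $hd(h) = \circlearrowleft$ satisfying $\Delta_{s\ldbrack h\rdbrack}(hd(h)) \subseteq G^s_n$.

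The key observation is that every instruction in the canonical system has the form $(A,B)$ where $A \subseteq V$ by Definition~\ref{perfect canonical I}, so $\circlearrowleft \notin A$. Hence, by the ``otherwise'' branch of Definition~\ref{perfect canonical delta}, one has $\Delta_{s\ldbrack h\rdbrack}(\circlearrowleft) = \{\circlearrowleft\}$. Combined with the containment above, this yields $\{\circlearrowleft\} \subseteq G^s_n$, contradicting the induction hypothesis.

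There is no real obstacle here: the argument rests entirely on the fact that $\circlearrowleft$ is a fixed point of every transition function in the canonical system, which is immediate from how instructions are indexed by subsets of $V$. One minor care point is to verify that a history $h$ ending in $\circlearrowleft$ can legitimately be fed into $s\ldbrack\cdot\rdbrack$, but since $\circlearrowleft$ is a state of $T(X)$ and $\Delta_i(w)$ is always nonempty, histories can indeed reach $\circlearrowleft$ and extend beyond it, so the expression $\Delta_{s\ldbrack h\rdbrack}(hd(h))$ is well-defined.
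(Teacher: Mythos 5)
Your proof is correct and follows essentially the same route as the paper's: both hinge on the facts that $G^s_0=G\subseteq V$ and that $\Delta_{s\ldbrack h\rdbrack}(\circlearrowleft)=\{\circlearrowleft\}$ by Definition~\ref{perfect canonical delta}, so $\circlearrowleft$ could only enter some $G^s_{n+1}$ if it were already in $G^s_n$. The only cosmetic difference is that you package the argument as induction on $n$ while the paper picks a minimal $n$ with $\circlearrowleft\in G^s_n$ and derives a contradiction.
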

\begin{proof}
We prove this statement by contradiction. Let $n$ be the smallest non-negative integer number such that $\circlearrowleft\;\in G^s_n$. Note that $n\neq 0$ because $G^s_0=G\subseteq V$ by Definition~\ref{perfect Gn}. Thus, $n>0$. Hence, by Definition~\ref{perfect Gn}, there exists a history $h$ such that $hd(h)=\;\circlearrowleft$ and
$
\Delta_{s\ldbrack h\rdbrack}(\circlearrowleft)\subseteq G^s_{n-1}.
$
Note that $\Delta_{s\ldbrack h\rdbrack}(\circlearrowleft)=\{\circlearrowleft\}$ by Definition~\ref{perfect canonical delta}. Therefore, $\circlearrowleft\;\in G^s_{n-1}$, which contradicts the choice of integer $n$.
\end{proof}
\begin{definition}\label{perfect Ginfty}
$G^s_\infty=\bigcup_{k\ge 0} G^s_k.$
\end{definition}
We now prove properties of the family of sets $\{G^s_n\}_{n\ge 0}$ that are needed to finish the proof of the completeness.
\begin{lemma}\label{g to Gn-1}
$X\vdash \{g\}\rhd G^s_{n-1}$ for each integer $n\ge 1$ and each state $g\in G^s_n$.
\end{lemma}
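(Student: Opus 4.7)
The plan is to do a direct case analysis on whether $g$ already lies in $G^s_{n-1}$ or was newly added at step $n$, and in each case chain together Armstrong's axioms applied to a single functional dependency from $X$.

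First, I note that by Lemma~\ref{no holes in G} we have $g\neq\;\circlearrowleft$, so $g\in V$ and $\{g\}\rhd G^s_{n-1}$ is a well-formed formula (both sides are nonempty subsets of $V$, since $G$, and hence $G^s_{n-1}$, is nonempty by assumption). The first case is $g\in G^s_{n-1}$: then $\{g\}\subseteq G^s_{n-1}$ and the reflexivity axiom immediately gives $\vdash\{g\}\rhd G^s_{n-1}$.

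The interesting case is $g\in G^s_n\setminus G^s_{n-1}$. By Definition~\ref{perfect Gn}, there exists a history $h$ with $hd(h)=g$ and $\Delta_{s\ldbrack h\rdbrack}(g)\subseteq G^s_{n-1}$. Write $s\ldbrack h\rdbrack=(A,B)$, so $A\rhd B\in X$ by Definition~\ref{perfect canonical I}, and therefore $X\vdash A\rhd B$. The key observation is that $g\in A$: otherwise, by Definition~\ref{perfect canonical delta}, $\Delta_{(A,B)}(g)=\{\circlearrowleft\}$, which would force $\circlearrowleft\in G^s_{n-1}$ and contradict Lemma~\ref{no holes in G}. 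Hence $\Delta_{(A,B)}(g)=B\subseteq G^s_{n-1}$.

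Now the proof finishes by a three-step chain using Armstrong's axioms. From $\{g\}\subseteq A$, reflexivity gives $\vdash\{g\}\rhd A$; combining with $X\vdash A\rhd B$ via transitivity yields $X\vdash\{g\}\rhd B$; since $B\subseteq G^s_{n-1}$, reflexivity gives $\vdash B\rhd G^s_{n-1}$, and a final application of transitivity delivers $X\vdash\{g\}\rhd G^s_{n-1}$. I do not expect a real obstacle here — note that no induction on $n$ is needed, since Definition~\ref{perfect Gn} only looks one step back and the transitive ``closure" reasoning is absorbed into how $G^s_n$ is constructed; the only subtle point is remembering to rule out $g\notin A$ using the black-hole lemma, which is exactly the purpose of Lemma~\ref{no holes in G}.
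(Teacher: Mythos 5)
Your proof is correct and follows essentially the same route as the paper's: rule out $g\notin A$ via the black-hole Lemma~\ref{no holes in G}, then chain $\{g\}\rhd A$ (Reflexivity), $A\rhd B$ (from $X$), and $B\rhd G^s_{n-1}$ (Reflexivity) with two applications of Transitivity. Your explicit preliminary case $g\in G^s_{n-1}$ is a small extra care the paper's proof leaves implicit when it asserts the existence of the witnessing history $h$, but it changes nothing substantive.
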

\begin{proof}
By Definition~\ref{perfect Gn}, assumptions $n\ge 1$ and $g\in G^s_n$ imply that there is a history $h$ such that $hd(h)=g$ and 
\begin{equation}\label{delta g}
  \Delta_{s\ldbrack h\rdbrack}(g)\subseteq G^s_{n-1}. 
\end{equation}
Furthermore, by Definition~\ref{perfect canonical I}, there are nonempty sets $A,B\subseteq V$ such that $s\ldbrack h \rdbrack=(A,B)$. 

\noindent{\em Case I}: $g\notin A$. Then, $\Delta_{s\ldbrack h\rdbrack}(g)=\{\circlearrowleft\}$ by Definition~\ref{perfect canonical delta}. Hence, $\{\circlearrowleft\}\subseteq G^s_{n-1}$ by equation~(\ref{delta g}), which contradicts Lemma~\ref{no holes in G}.

\noindent{\em Case II}: $g\in A$. Then, $\vdash \{g\}\rhd A$ by the Reflexivity axiom. At the same time, $X\vdash A\rhd B$ by Definition~\ref{perfect canonical I}. Thus, by the Transitivity axiom, $X\vdash \{g\}\rhd B$.

Assumption $g\in A$ implies $\Delta_{s\ldbrack h\rdbrack}(g)=B$, by Definition~\ref{perfect canonical delta}. Thus, $B\subseteq G^s_{n-1}$ due to equation~(\ref{delta g}). Hence, $\vdash B\rhd G^s_{n-1}$ by the Reflexivity axiom.

Finally, statements $X\vdash \{g\}\rhd B$ and $\vdash B\rhd G^s_{n-1}$ by the Transitivity axiom imply, using Modus Ponens inference rule twice, that $X \vdash \{g\}\rhd G^s_{n-1}$.
\end{proof}
\begin{lemma}\label{all the a}
For any $n\ge 1$ and any views $a_1,\dots,a_n\in V$, if $X\vdash \{a_k\}\rhd B$ for each $k\le n$, then $X\vdash \{a_1,\dots,a_n\}\rhd B$.
\end{lemma}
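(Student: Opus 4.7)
The plan is to prove this by induction on $n$, using the Augmentation and Transitivity axioms together with Modus Ponens. The base case $n=1$ is immediate from the hypothesis, so the substance of the argument is in the inductive step.

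For the inductive step, assume the statement holds for $n$ and suppose $X\vdash\{a_k\}\rhd B$ for each $k\le n+1$. First I would invoke the induction hypothesis on $a_1,\dots,a_n$ to obtain $X\vdash\{a_1,\dots,a_n\}\rhd B$. Next I would apply the Augmentation axiom with $C=\{a_{n+1}\}$ to conclude
\[
X\vdash\{a_1,\dots,a_n,a_{n+1}\}\rhd B\cup\{a_{n+1}\}.
\]
Separately, starting from $X\vdash\{a_{n+1}\}\rhd B$ and applying Augmentation with $C=B$, I get
\[
X\vdash B\cup\{a_{n+1}\}\rhd B\cup B,
\]
i.e. $X\vdash B\cup\{a_{n+1}\}\rhd B$. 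Finally, the Transitivity axiom (with Modus Ponens applied twice) combines these two statements into $X\vdash\{a_1,\dots,a_{n+1}\}\rhd B$, completing the induction.

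There is no real obstacle here; the only minor care needed is to keep the sets nonempty so that each $\rhd$-formula lies in $\Phi$ (Definition~\ref{Phi}), which is automatic since all sets involved contain at least one $a_k$ or a member of $B$. The proof is essentially the classical derivation in Armstrong's system showing that one can ``combine left-hand sides'' of functional dependencies, and it uses only Reflexivity implicitly (through the hypotheses), together with Augmentation and Transitivity.
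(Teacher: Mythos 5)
Your proof is correct and follows essentially the same route as the paper: induction on $n$, with the inductive step obtained by applying Augmentation with $C=\{a_{n+1}\}$ to the induction hypothesis, applying Augmentation with $C=B$ to $\{a_{n+1}\}\rhd B$ to get $B\cup\{a_{n+1}\}\rhd B$, and chaining the two via Transitivity. The only difference is the cosmetic choice of stepping from $n$ to $n+1$ rather than from $n-1$ to $n$.
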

\begin{proof}
We prove this statement by induction on $n$. In the base case, $X\vdash \{a_1\}\rhd B$ due to the assumption of the lemma.

By the induction hypothesis, $X\vdash \{a_1,\dots,a_{n-1}\}\rhd B$. Thus, by the Augmentation axiom,
\begin{equation}\label{friday}
    X\vdash \{a_1,\dots,a_{n-1},a_n\}\rhd B\cup \{a_n\}.
\end{equation}
At the same time, $X\vdash \{a_n\}\rhd B$ by the assumption of the lemma. Hence, by the Augmentation axiom $X\vdash B\cup\{a_n\}\rhd B$. Thus, $X\vdash \{a_1,\dots,a_{n-1},a_n\}\rhd B$ by statement~(\ref{friday}) and the Transitivity axiom.
\end{proof}
\begin{lemma}\label{Gn to Gn-1}
$X\vdash G^s_n\rhd G^s_{n-1}$ for each $n\ge 1$.
\end{lemma}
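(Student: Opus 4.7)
The plan is to combine the two preceding lemmas in the obvious way. First, I would observe that $G^s_n$ is a set of views (subset of $V$), not containing $\circlearrowleft$, by Lemma~\ref{no holes in G}; and that $G^s_n$ is nonempty because the chain is increasing and $G^s_0 = G$, which is nonempty by the hypothesis of Definition~\ref{perfect Gn}. Enumerate $G^s_n = \{g_1, \dots, g_m\}$ for some $m \ge 1$.

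Second, I would apply Lemma~\ref{g to Gn-1} pointwise: since $n \ge 1$ and each $g_k$ lies in $G^s_n$, we have $X \vdash \{g_k\} \rhd G^s_{n-1}$ for every $k \le m$.

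Third, feed these $m$ derivations into Lemma~\ref{all the a} (with $B = G^s_{n-1}$) to obtain $X \vdash \{g_1, \dots, g_m\} \rhd G^s_{n-1}$, which is exactly $X \vdash G^s_n \rhd G^s_{n-1}$.

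There is no real obstacle; the statement is essentially a bookkeeping consequence of the previous two lemmas. The only subtlety worth flagging in the proof is the nonemptiness of $G^s_n$, which is required so that Lemma~\ref{all the a} applies (it is stated for $n \ge 1$-element index sets), and the implicit use of Lemma~\ref{no holes in G} to ensure that every element of $G^s_n$ is indeed a view in $V$, so that the notation $\{g_k\} \rhd G^s_{n-1}$ denotes a well-formed formula in $\Phi$.
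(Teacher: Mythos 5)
Your proof is correct and follows exactly the paper's route: the paper's own proof is the one-line remark that the lemma ``follows from Lemma~\ref{g to Gn-1} and Lemma~\ref{all the a}.'' Your additional care about the nonemptiness of $G^s_n$ and the fact that $\circlearrowleft\notin G^s_n$ (so that Lemma~\ref{all the a} is applicable to a set of views) fills in details the paper leaves implicit, and both points are justified as you state them.
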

\begin{proof}
The statement of the lemma follows from Lemma~\ref{g to Gn-1} and Lemma~\ref{all the a}.
\end{proof}
\begin{lemma}\label{Gn G}
$X\vdash G^s_n\rhd G$ for each $n\ge 0$.
\end{lemma}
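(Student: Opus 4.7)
The plan is a straightforward induction on $n$, using the already-established Lemma~\ref{Gn to Gn-1} as the engine and the Transitivity axiom as the glue.

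For the base case $n=0$, note that $G^s_0 = G$ by item~1 of Definition~\ref{perfect Gn}, so the Reflexivity axiom immediately gives $\vdash G^s_0 \rhd G$, and thus $X \vdash G^s_0 \rhd G$.

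For the inductive step, assume $X \vdash G^s_{n-1} \rhd G$. Lemma~\ref{Gn to Gn-1} yields $X \vdash G^s_n \rhd G^s_{n-1}$. Applying the Transitivity axiom $G^s_n \rhd G^s_{n-1} \to (G^s_{n-1} \rhd G \to G^s_n \rhd G)$ together with Modus Ponens twice produces $X \vdash G^s_n \rhd G$, completing the induction.

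There is no real obstacle here: all the work has already been absorbed into Lemma~\ref{g to Gn-1}, Lemma~\ref{all the a}, and Lemma~\ref{Gn to Gn-1}, so this lemma is just a short induction chaining them via transitivity.
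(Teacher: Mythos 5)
Your proof is correct and is essentially identical to the paper's: both argue by induction on $n$, using the Reflexivity axiom for the base case $G^s_0=G$ and chaining Lemma~\ref{Gn to Gn-1} with the induction hypothesis via the Transitivity axiom for the inductive step.
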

\begin{proof}
We prove this statement by induction on integer $n$. In the base case, due to Definition~\ref{perfect Gn}, it suffices to show that $\vdash G\rhd G$, which is an instance of the Reflexivity axiom.

For the induction step, note that $X\vdash G^s_n\rhd G^s_{n-1}$ by Lemma~\ref{Gn to Gn-1}. At the same time, $X\vdash G^s_{n-1}\rhd G$ by the induction hypothesis. Hence, $X\vdash G^s_n\rhd G$ by the Transitivity axiom.
\end{proof}
\begin{lemma}\label{infty = n}
There is $n\ge 0$, such that $G^s_\infty=G^s_n$.
\end{lemma}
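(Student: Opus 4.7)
The plan is to exploit finiteness. The set $V$ of views is finite by hypothesis, and by Lemma~\ref{no holes in G} the sink $\circlearrowleft$ never appears in any $G^s_n$, so each $G^s_n$ is actually a subset of the finite set $V$. Moreover, Definition~\ref{perfect Gn} ensures the chain $G^s_0 \subseteq G^s_1 \subseteq \cdots$ is monotone nondecreasing.

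The key step is to show the chain stabilizes after finitely many stages. Because every $G^s_n$ is contained in the finite set $V$, the sequence of cardinalities $|G^s_0| \le |G^s_1| \le \cdots$ is a bounded nondecreasing sequence of nonnegative integers and hence eventually constant; so there exists some $n \ge 0$ with $|G^s_n| = |G^s_{n+1}|$, which together with $G^s_n \subseteq G^s_{n+1}$ yields $G^s_n = G^s_{n+1}$.

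Next I would prove by induction on $k \ge 0$ that $G^s_{n+k} = G^s_n$. The base case is trivial. For the inductive step, assuming $G^s_{n+k} = G^s_n$ and using $G^s_{n+k-1} = G^s_n$ (available once $k \ge 1$, or read off directly from the stabilization at $n$), Definition~\ref{perfect Gn} gives
\begin{equation*}
G^s_{n+k+1} = G^s_{n+k} \cup \left\{ hd(h) \;\middle|\; h\in H,\; \Delta_{s\ldbrack h\rdbrack}(hd(h)) \subseteq G^s_{n+k} \right\},
\end{equation*}
and substituting $G^s_{n+k} = G^s_n$ on both the left and inside the set-builder shows this equals $G^s_{n+1}$, which by the choice of $n$ equals $G^s_n$.

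Finally, combining $G^s_k = G^s_n$ for all $k \ge n$ with $G^s_k \subseteq G^s_n$ for all $k \le n$ (by monotonicity), the union in Definition~\ref{perfect Ginfty} collapses to $G^s_\infty = G^s_n$. I do not expect any real obstacle here: the argument is a textbook ``monotone chain in a finite poset must stabilize'' application, and the only subtlety is remembering to cite Lemma~\ref{no holes in G} to keep the sets inside the finite set $V$ rather than the potentially irrelevant $V\cup\{\circlearrowleft\}$.
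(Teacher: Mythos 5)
Your proof is correct and follows essentially the same route as the paper: the paper's one-line argument is just that the nondecreasing chain $G^s_0\subseteq G^s_1\subseteq\cdots$ lives inside the finite set $V\cup\{\circlearrowleft\}$ and must therefore stabilize, which is exactly the fact you establish in detail via cardinalities and the induction on $k$. Your appeal to Lemma~\ref{no holes in G} is harmless but unnecessary, since $V\cup\{\circlearrowleft\}$ is already finite.
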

\begin{proof}
Since $G^s_0\subseteq G^s_1\subseteq G^s_2\subseteq \dots\subseteq  V\cup\{\circlearrowleft\}$ and set $V$ is finite, there must exist an integer $n$ such that $G^s_n=\bigcup_{k\ge 0} G^s_k$. Therefore, $G^s_n=G^s_\infty$ by Definition~\ref{perfect Ginfty}.
\end{proof}
\begin{lemma}\label{Delta Ginfity}
 Set $(\Delta_{s\ldbrack h\rdbrack}(hd(h)))\setminus G^s_\infty$ is non-empty for each history $h$ such that $hd(h)\notin G^s_\infty$.
\end{lemma}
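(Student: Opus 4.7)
The plan is to proceed by contradiction. Suppose instead that $\Delta_{s\ldbrack h\rdbrack}(hd(h))\subseteq G^s_\infty$ for some history $h$ with $hd(h)\notin G^s_\infty$.

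By Lemma~\ref{infty = n}, there is an integer $n\ge 0$ such that $G^s_\infty=G^s_n$. Substituting, we get $\Delta_{s\ldbrack h\rdbrack}(hd(h))\subseteq G^s_n$. Now I would invoke the recursive clause (item 2) of Definition~\ref{perfect Gn} applied to the history $h$: since $hd(h)\in S$ and $\Delta_{s\ldbrack h\rdbrack}(hd(h))\subseteq G^s_n$, we have $hd(h)\in G^s_{n+1}$.

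Finally, by Definition~\ref{perfect Ginfty}, $G^s_{n+1}\subseteq G^s_\infty$, so $hd(h)\in G^s_\infty$, contradicting the hypothesis on $h$. I do not anticipate any real obstacle; the argument is essentially unwrapping the definitions, with the only subtle point being the appeal to Lemma~\ref{infty = n} to turn the ``infinite'' union $G^s_\infty$ into a concrete stage $G^s_n$ so that the one-step closure given by item 2 of Definition~\ref{perfect Gn} can be applied.
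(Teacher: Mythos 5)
Your proof is correct and follows essentially the same route as the paper's: both reduce $G^s_\infty$ to a finite stage $G^s_n$ via Lemma~\ref{infty = n}, apply item 2 of Definition~\ref{perfect Gn} to place $hd(h)$ in $G^s_{n+1}$, and conclude via Definition~\ref{perfect Ginfty}. The only cosmetic difference is that you phrase it as a contradiction while the paper proves the contrapositive directly.
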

\begin{proof} 
Suppose $\Delta_{s\ldbrack h\rdbrack}(hd(h))\subseteq G^s_\infty$ for some history $h$. It suffices to show that $hd(h)\in G^s_\infty$. Indeed, by Lemma~\ref{infty = n} there is $n$ such that $G^s_\infty=G^s_n$. Thus, $\Delta_{s\ldbrack h\rdbrack}(hd(h))\subseteq G^s_n$. Hence, $hd(h)\in G^s_{n+1}$ by Definition~\ref{perfect Gn}. Then, $hd(h)\in G^s_\infty$ by Definition~\ref{perfect Ginfty}.
\end{proof}
\begin{lemma}\label{history extend}
For any positive integer $k$ and any history $w_0,i_1,w_1,\dots,w_{k-1}$, if $w_{k-1}\notin G^s_\infty$, then there is a state $w_{k}\notin G^s_\infty$ such that $w_0,i_1,w_1,\dots,w_{k-1},i_k,w_k$ is a history, where $i_k=s\ldbrack w_0,i_1,w_1,\dots,w_{k-1}\rdbrack$.
\end{lemma}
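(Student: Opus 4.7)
My plan is to apply Lemma~\ref{Delta Ginfity} directly to the history $h = w_0,i_1,w_1,\dots,w_{k-1}$. Since $hd(h) = w_{k-1}$ and the hypothesis gives $w_{k-1} \notin G^s_\infty$, that lemma yields a nonempty set $\Delta_{s\ldbrack h\rdbrack}(w_{k-1}) \setminus G^s_\infty$. I would then pick any $w_k$ from this set, which by construction satisfies $w_k \notin G^s_\infty$ and $w_k \in \Delta_{i_k}(w_{k-1})$ where $i_k = s\ldbrack h\rdbrack$.

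The remaining task is simply to verify that $w_0,i_1,w_1,\dots,w_{k-1},i_k,w_k$ is a history in the sense of Definition~\ref{history}. The first two conditions (all $w_j \in S$, all $i_j \in I$) hold because the prefix $w_0,i_1,\dots,w_{k-1}$ is already a history by assumption, $w_k \in V \cup \{\circlearrowleft\} = S$ (as it lies in some $\Delta$-image), and $i_k \in I$ since $s$ is a recall strategy (hence takes values in $I$ by Definition~\ref{strategy}). The third condition, $w_j \in \Delta_{i_j}(w_{j-1})$, holds for $j \le k-1$ by the history hypothesis, and for $j = k$ it follows from our choice $w_k \in \Delta_{i_k}(w_{k-1})$.

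There is essentially no obstacle here; the lemma is a direct corollary of Lemma~\ref{Delta Ginfity} combined with unpacking Definition~\ref{history}. The only thing to be mildly careful about is to invoke the right history in Lemma~\ref{Delta Ginfity}, namely the given prefix itself, so that $hd(h) = w_{k-1}$ and the instruction $s\ldbrack h\rdbrack$ agrees with the $i_k$ demanded in the conclusion. I would present the argument in two short sentences.
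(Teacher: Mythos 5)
Your proposal is correct and matches the paper's own proof essentially verbatim: both apply Lemma~\ref{Delta Ginfity} to the given history $w_0,i_1,\dots,w_{k-1}$, pick any $w_k$ in the nonempty set $(\Delta_{i_k}(w_{k-1}))\setminus G^s_\infty$, and conclude by Definition~\ref{history}. The extra care you take in checking all three conditions of Definition~\ref{history} is fine but not something the paper spells out.
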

\begin{proof}
By Lemma~\ref{Delta Ginfity}, set $(\Delta_{i_k}(w_{k-1}))\setminus G^s_\infty$ is not empty. Let $w_k$ be any state such that
$w_k\in \Delta_{i_k}(w_{k-1})$ and $w_k\notin G^s_\infty$. Then, $w_0,i_1,w_1,\dots,w_{k-1},i_k,w_k$ is a history by Definition~\ref{history}.
\end{proof}
\begin{lemma}\label{path exists not in G infty}
For each state $w_0\notin G^s_\infty$ there exists a path $w_0,i_1,w_1,\dots$ under recall strategy $s$ such that $w_k\notin G^s_\infty$ for each $k\ge 0$.
\end{lemma}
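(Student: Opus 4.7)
The plan is a straightforward recursive construction that repeatedly applies Lemma~\ref{history extend}. The key observation is that Lemma~\ref{history extend} gives exactly what we need for one extension step: given any history ending outside $G^s_\infty$, we can prolong it by one transition (using the instruction prescribed by $s$ on that history) while keeping the new endpoint outside $G^s_\infty$. Iterating this gives the desired infinite path.

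More concretely, I would define the sequence $w_0,i_1,w_1,i_2,w_2,\dots$ by recursion on $k$. The base case is just the one-element sequence $w_0$, which is a history by Definition~\ref{history} with $n=0$, and $w_0\notin G^s_\infty$ by hypothesis. For the inductive step, assume that $w_0,i_1,w_1,\dots,w_{k-1}$ has already been constructed as a history with $w_{k-1}\notin G^s_\infty$. Apply Lemma~\ref{history extend} to obtain $i_k = s\ldbrack w_0,i_1,w_1,\dots,w_{k-1}\rdbrack$ and a state $w_k\notin G^s_\infty$ such that $w_0,i_1,w_1,\dots,w_{k-1},i_k,w_k$ is again a history. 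This maintains both invariants: every finite initial segment is a history, and every state visited lies outside $G^s_\infty$.

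Finally, I would verify that the infinite sequence so constructed is a path under the recall strategy $s$ in the sense of Definition~\ref{perfect path}. Both clauses of that definition are satisfied directly by construction: clause (1) because each finite prefix ending in some $w_{k-1}$ is a history by the inductive construction, and clause (2) because at every step we explicitly set $i_k = s\ldbrack w_0,i_1,w_1,\dots,w_{k-1}\rdbrack$. The condition $w_k\notin G^s_\infty$ for all $k\ge 0$ holds by the inductive invariant.

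I do not anticipate any real obstacle here, since all the nontrivial work has been localized into Lemma~\ref{Delta Ginfity} and Lemma~\ref{history extend}. The only thing worth being careful about is making the base case align with Definition~\ref{history} (which allows $n=0$, so a single state is a legitimate history) so that the first application of Lemma~\ref{history extend} is legal.
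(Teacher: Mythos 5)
Your proposal is correct and follows essentially the same route as the paper: start from the single-element history $w_0$, iterate Lemma~\ref{history extend} to extend while staying outside $G^s_\infty$, and then check both clauses of Definition~\ref{perfect path}. The paper's proof is just a more compressed statement of exactly this construction.
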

\begin{proof}
Note that single-element sequence $w_0$ is a history by Definition~\ref{history}. Due to Lemma~\ref{history extend}, there is an infinite sequence $w_0,i_1,w_1,\dots$ such that for each integer $k\ge 1$,
\begin{enumerate}
    \item $w_0,i_1,w_1,\dots,w_{k-1}$ is a history,
    \item $i_k=s\ldbrack w_0,i_1,w_1,\dots,w_{k-1}\rdbrack$,
    \item $w_k\notin G^s_\infty$.
\end{enumerate}
By Definition~\ref{perfect path}, sequence $w_0,i_1,w_1,\dots$ is a path under recall strategy $s$.
\end{proof}
\begin{lemma}\label{perfect vdash to vDash}
If $X\vdash A\rhd B$, then $T(X)\vDash A\rhd B$.
\end{lemma}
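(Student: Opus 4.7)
The plan is to use the maximal consistency of $X$ to upgrade $X\vdash A\rhd B$ to membership $A\rhd B \in X$, which by Definition~\ref{perfect canonical I} places an instruction $(A,B)$ inside the instruction set $I$ of the canonical system $T(X)$. This single instruction will be enough: taking the constant recall strategy $s$ defined by $s\ldbrack h\rdbrack = (A,B)$ for every history $h$ should already witness $T(X)\vDash A\rhd B$, because each instruction $(A,B)$ is designed precisely to route any state of $A$ into $B$ in one step.

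To verify this witness works, I would take an arbitrary path $w_0,i_1,w_1,i_2,w_2,\dots$ in $Path_s(A)$ and trace one step. By Definition~\ref{path set} combined with Lemma~\ref{* iff}, the starting state satisfies $w_0\in A$. By Definition~\ref{perfect path} the strategy forces $i_1 = s\ldbrack w_0\rdbrack = (A,B)$, and item~3 of Definition~\ref{history} places $w_1$ in $\Delta_{(A,B)}(w_0)$. Because $w_0\in A$, Definition~\ref{perfect canonical delta} simplifies this to $w_1 \in B$. Hence $[w_1] = \{w_1\} \in B^*$ via Lemma~\ref{* iff}, so the path lies in $Visit(B)$ by Definition~\ref{visit set} (taking $k=1$). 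This gives $Path_s(A)\subseteq Visit(B)$, and the conclusion $T(X)\vDash A\rhd B$ follows from Definition~\ref{perfect sat}.

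The only genuinely non-trivial step is the invocation of maximality at the very start; after that the argument is a short unfolding of canonical definitions, with no need to induct on the derivation of $X\vdash A\rhd B$. In particular, the more elaborate machinery built up around the sets $G^s_n$ in Lemmas~\ref{g to Gn-1}--\ref{path exists not in G infty} is not required for this direction, since those lemmas are calibrated for the converse implication, where an arbitrary strategy must be unpacked into a derivation inside $X$. The one thing to keep in mind is that the sink state $\circlearrowleft$ never interferes: since $A\subseteq V$ we have $w_0 \neq \circlearrowleft$, and applying $(A,B)$ at $w_0\in A$ routes us straight into $B$ rather than into the sink.
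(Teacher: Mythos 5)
Your proposal is correct and follows essentially the same route as the paper's own proof: define the constant recall strategy $s\ldbrack h\rdbrack=(A,B)$ and trace one step of an arbitrary path in $Path_s(A)$ to land in $B$. Your explicit remark that maximality of $X$ is what converts $X\vdash A\rhd B$ into $A\rhd B\in X$ (so that $(A,B)\in I$ per Definition~\ref{perfect canonical I}) is a small point the paper glosses over, but otherwise the arguments coincide.
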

\begin{proof}
Assumption $X\vdash A\rhd B$ implies $(A,B)\in I$ by Definition~\ref{perfect canonical I}. Consider recall strategy $s$ such that $s\ldbrack h\rdbrack=(A,B)$ for each class of histories $\ldbrack h \rdbrack$. Consider any path $w_0,i_1,w_1,\dots$ under recall strategy $s$ where $[w_0]\in A^*$. Then, $w_0\in A$ by Lemma~\ref{* iff}.

By Definition~\ref{perfect sat} and Definition~\ref{visit set} it suffices to show that $[w_1]\in B^*$. Indeed, $i_1=(A,B)$ by choice of recall strategy $s$. Thus, $\Delta_{i_1}(w_0)=\Delta_{(A,B)}(w_0)=B$ by Definition~\ref{perfect canonical delta} and due to the assumption $w_0\in A$. 

By Definition~\ref{perfect path}, sequence $w_0,i_1,w_1$ is a history. Hence, $w_1\in \Delta_{i_1}(w_0)$ by Definition~\ref{history}.
Thus, $w_1\in \Delta_{i_1}(w_0)=B$. Then, $[w_1]\in B^*$ by Lemma~\ref{* iff}.
\end{proof}
\begin{lemma}\label{perfect vDash to vdash}
If $T(X)\vDash E\rhd G$, then $X\vdash E\rhd G$.
\end{lemma}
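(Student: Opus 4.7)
The plan is to unpack the semantic assumption $T(X) \vDash E \rhd G$ into an explicit recall strategy $s$ with $Path_s(E) \subseteq Visit(G)$, and then exploit the fact that the already-established family $\{G^s_n\}_{n\ge 0}$ is designed to track exactly those states from which $s$ is guaranteed to force a visit to $G$. The whole proof reduces to showing that $E \subseteq G^s_\infty$, because from there Lemma~\ref{infty = n} gives some $n$ with $G^s_\infty = G^s_n$, Lemma~\ref{Gn G} gives $X \vdash G^s_\infty \rhd G$, and one application of the Reflexivity axiom (on $E \subseteq G^s_\infty$) together with the Transitivity axiom yields $X \vdash E \rhd G$.

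To show $E \subseteq G^s_\infty$, I would argue by contradiction. Suppose some $a \in E$ satisfies $a \notin G^s_\infty$. Viewed as a state of $T(X)$, this $a$ lies in $V$, so by Lemma~\ref{* iff} we have $[a] \in E^*$. Now Lemma~\ref{path exists not in G infty} produces a path $\pi = a, i_1, w_1, i_2, w_2, \dots$ under $s$ with $w_k \notin G^s_\infty$ for every $k \ge 0$. Since $G = G^s_0 \subseteq G^s_\infty$, none of the $w_k$ lies in $G$; and since the indistinguishability relation of $T(X)$ is equality, the condition $[w_k] \in G^*$ is equivalent to $w_k \in G$ by Lemma~\ref{* iff} (the sink state $\circlearrowleft$ also fails $[\circlearrowleft] \in G^*$ since $G \subseteq V$). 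Hence $\pi \in Path_s(E) \setminus Visit(G)$, contradicting the witness property $Path_s(E) \subseteq Visit(G)$.

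The main obstacle is that last contradiction, and in particular making sure that the path provided by Lemma~\ref{path exists not in G infty} really does witness failure of $Visit(G)$. Two small points need care: first, that the starting condition $[w_0] \in E^*$ genuinely holds for $w_0 = a \in E$, which is immediate from Lemma~\ref{* iff} because in $T(X)$ each view is its own equivalence class; and second, that the possibility $w_k = \;\circlearrowleft$ does not accidentally give $[w_k] \in G^*$, which is ruled out because $G \subseteq V$ and $\circlearrowleft \notin V$. Once these are in place, the contradiction is clean and the rest of the proof is a one-line assembly of Lemma~\ref{infty = n}, Lemma~\ref{Gn G}, and the Reflexivity plus Transitivity axioms, concluding $X \vdash E \rhd G$.
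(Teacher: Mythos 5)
Your proposal is correct and follows essentially the same route as the paper's own proof: both extract a witnessing recall strategy $s$, use Lemma~\ref{path exists not in G infty} to derive a contradiction when $E\nsubseteq G^s_\infty$, and then combine Lemma~\ref{infty = n}, Lemma~\ref{Gn G}, Reflexivity, and Transitivity to conclude $X\vdash E\rhd G$. The only (cosmetic) difference is that you organize the argument as a single reduction to $E\subseteq G^s_\infty$ rather than two explicit cases, and you spell out the small point about $\circlearrowleft\notin G^*$ slightly more carefully than the paper does.
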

\begin{proof}
By Definition~\ref{perfect sat}, assumption $T(X)\vDash E\rhd G$ implies  $Path_s(E)\subseteq Visit(G)$ for some recall strategy $s$.

\noindent{\em Case I:} $E\subseteq G^s_\infty$. By Lemma~\ref{infty = n} there exists an integer $n\ge 0$, such that $E\subseteq G^s_n$. Thus, $\vdash E\rhd G^s_n$ by the Reflexivity axiom. At the same time, $X\vdash G^s_n\rhd G$ by Lemma~\ref{Gn G}. Therefore, $X\vdash E\rhd G$ by the Transitivity axiom. 

\noindent{\em Case II:} $E\nsubseteq G^s_\infty$. Then, there is an element $w_0\in E$ such that $w_0\notin G^s_\infty$. Thus, by Lemma~\ref{path exists not in G infty} there is a path $\pi=w_0,i_1,w_1,\dots$ under recall strategy $s$ such that $w_k\notin G^s_\infty$ for all $k\ge 0$. Hence, $w_k\notin G$ for all $k\ge 0$ because $G=G^s_0\subseteq G^s_\infty$ by Definition~\ref{perfect Gn} and Definition~\ref{perfect Ginfty}.
Thus, $[w_0]\in E^*$ and $[w_k]\notin G^*$ for all $k\ge 0$ by Lemma~\ref{* iff}. Then,  states in path $\pi\in Path_s(E)$ by Definition~\ref{path set} and $\pi\notin Visit(G)$ by Definition~\ref{visit set}. Therefore, $Path_s(E)\nsubseteq Visit(G)$, which contradicts the choice of strategy $s$.
\end{proof}
\begin{lemma}\label{perfect induction}
$X\vdash\phi$ iff $T(X)\vDash\phi$ for each $\phi\in \Phi$.
\end{lemma}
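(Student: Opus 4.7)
The plan is to prove this biconditional by induction on the structural complexity of $\phi \in \Phi$, following Definition~\ref{Phi}. The base case is the atomic formula $\phi = A \rhd B$, where both directions are already established: the forward direction ($X \vdash A \rhd B \Rightarrow T(X) \vDash A \rhd B$) is exactly Lemma~\ref{perfect vdash to vDash}, and the converse is Lemma~\ref{perfect vDash to vdash}. So there is no work to do in the base case beyond citing these two results.

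For the inductive step, I would leverage the fact that $X$ is a maximal consistent set of formulae, a property that must be assumed at this stage of the completeness argument (the canonical transition system $T(X)$ was built from such a set). For $\phi = \neg\psi$, the chain of equivalences runs: $X \vdash \neg\psi$ iff $X \not\vdash \psi$ (by maximal consistency), iff $T(X) \not\vDash \psi$ (by the induction hypothesis applied to $\psi$), iff $T(X) \vDash \neg\psi$ (by the standard Boolean semantics of negation). For $\phi = \psi_1 \to \psi_2$, the standard maximal-consistency reasoning gives $X \vdash \psi_1 \to \psi_2$ iff ($X \not\vdash \psi_1$ or $X \vdash \psi_2$), which matches $T(X) \vDash \psi_1 \to \psi_2$ via two applications of the induction hypothesis.

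I expect no significant obstacle here. All the semantic and proof-theoretic heavy lifting — the soundness of Armstrong's axioms and the canonical model construction yielding the atomic case — has already been done in Lemma~\ref{perfect vdash to vDash} and Lemma~\ref{perfect vDash to vdash}. The only ingredient needed beyond the induction hypothesis is the standard pair of facts about maximal consistent sets in classical propositional logic, namely that they respect the Boolean connectives ($X \vdash \neg\psi$ iff $X \not\vdash \psi$, and $X \vdash \psi_1 \to \psi_2$ iff $X \not\vdash \psi_1$ or $X \vdash \psi_2$), both of which follow immediately from the inclusion of all propositional tautologies among the axioms together with Modus Ponens. Thus the proof reduces to assembling these three pieces in a routine structural induction.
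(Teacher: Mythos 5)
Your proposal is correct and matches the paper's proof exactly: the paper also argues by structural induction, cites Lemma~\ref{perfect vdash to vDash} and Lemma~\ref{perfect vDash to vdash} for the base case, and handles the connectives via the maximality and consistency of $X$ in the standard way. You have merely spelled out the routine details the paper leaves implicit.
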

\begin{proof}
We prove this lemma by induction on the structural complexity of $\phi$. The base case follows from Lemma~\ref{perfect vdash to vDash} and Lemma~\ref{perfect vDash to vdash}. The induction case follows from the maximality and the consistency of set $X$ in the standard way.
\end{proof}

We are now ready to state and prove the completeness theorem for the recall strategies.
\begin{theorem}\label{perfect completeness}
If $T\vDash\phi$ for every system $T$, then $\vdash\phi$.
\end{theorem}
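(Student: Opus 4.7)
The plan is to argue by contrapositive, using the canonical-model machinery already established. Assume $\not\vdash\phi$; I want to produce a transition system $T$ with $T\not\vDash\phi$. Since $\phi$ is unprovable, the singleton $\{\neg\phi\}$ is consistent with respect to the axioms (Reflexivity, Augmentation, Transitivity, plus propositional tautologies closed under Modus Ponens). By a standard Lindenbaum-style extension, $\{\neg\phi\}$ can be extended to a maximal consistent set $X\subseteq\Phi$, so in particular $\neg\phi\in X$.

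Next I instantiate the canonical transition system $T(X)=(V\cup\{\circlearrowleft\},=,*,I,\{\Delta_i\}_{i\in I})$ defined earlier from $X$ via Definition~\ref{perfect canonical I} and Definition~\ref{perfect canonical delta}. By Lemma~\ref{perfect induction}, for every formula $\psi\in\Phi$ we have $X\vdash\psi$ iff $T(X)\vDash\psi$. Applying this equivalence to $\neg\phi$, together with consistency and maximality of $X$, yields $X\vdash\neg\phi$, hence $T(X)\vDash\neg\phi$, and consequently $T(X)\not\vDash\phi$. This directly contradicts the hypothesis that $T\vDash\phi$ holds in every transition system, completing the contrapositive argument.

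There is essentially no hard step left here, because the main technical content has already been packaged into Lemma~\ref{perfect induction}: the base case of that induction handled atomic formulae $A\rhd B$ via the pair Lemma~\ref{perfect vdash to vDash} and Lemma~\ref{perfect vDash to vdash}, and the Boolean induction step used the maximality and consistency of $X$ in the usual way (i.e., $\neg\psi\in X$ iff $\psi\notin X$, and $\psi\to\chi\in X$ iff $\psi\notin X$ or $\chi\in X$). The only mild subtlety is making sure the Lindenbaum extension applies to our language $\Phi$, but since $V$ is finite, the set $\Phi$ is countable and the usual enumeration-based construction of $X$ goes through without modification. Thus the whole proof amounts to invoking Lindenbaum together with Lemma~\ref{perfect induction}.
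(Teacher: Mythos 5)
Your proof is correct and follows exactly the paper's own argument: extend $\{\neg\phi\}$ to a maximal consistent set $X$ via Lindenbaum, then apply Lemma~\ref{perfect induction} to the canonical system $T(X)$ to get $T(X)\vDash\neg\phi$ and hence $T(X)\nvDash\phi$. The extra remarks about countability of $\Phi$ and the structure of the induction are fine but add nothing beyond what the paper already packages into that lemma.
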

\begin{proof}
Suppose $\nvdash\phi$. Let $X$ be a maximal consistent set containing formula $\neg\phi$. Thus, $T(X)\vDash\neg\phi$ by Lemma~\ref{perfect induction}. Therefore, $T(X)\nvDash\phi$.
\end{proof}

\section{Navigation with Memoryless Strategies}

In this section we give a sound and complete axiomatization of navigability under memoryless strategies. We start by modifying Definition~\ref{perfect sat} to refer to memoryless strategies instead of recall strategies:

\begin{definition}\label{imperfect sat}
$T\vDash A\rhd B$ if $Path_s(A)\subseteq Visit(B)$ for some memoryless strategy $s$ of a transition system $T$.
\end{definition}

\subsubsection{Axioms} 
The logical system for memoryless strategies is the same as for recall strategies with the exception that the Transitivity axiom is replaced by the following principle:

\begin{enumerate}
    \item[3.] Monotonicity: $A'\rhd B \to A\rhd B$, where $A\subseteq A'$.
\end{enumerate}
This principle can be derived from Armstrong's axioms.

\begin{theorem}\label{imperfect soundness}
If $\vdash\phi$, then $T\vDash\phi$ for every system $T$.
\end{theorem}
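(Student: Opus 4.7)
The plan is to establish soundness of each of the three axiom schemas (Reflexivity, Augmentation, and Monotonicity) with respect to the memoryless semantics of Definition~\ref{imperfect sat}, and then note that soundness of Modus Ponens follows in the standard propositional way. The crucial observation is that Lemma~\ref{path properties} carries over verbatim to memoryless strategies: inspection of its proof shows it only appeals to Definition~\ref{path set} and Definition~\ref{visit set} (which are uniform across strategy types) together with the fact that $Path_s(A)$ is selected by the condition $[w_0]\in A^*$, which is insensitive to whether $s$ depends on histories or just on states.

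For Reflexivity with $A\subseteq B$, I would fix any instruction $i_0\in I$ (nonempty by item~4 of Definition~\ref{transition system}) and take the constant memoryless strategy defined by $s[w]=i_0$ for all $w\in S$. The memoryless analog of items~1 and~2 of Lemma~\ref{path properties} then gives $Path_s(A)\subseteq Path_s(B)\subseteq Visit(B)$, so $T\vDash A\rhd B$ by Definition~\ref{imperfect sat}, mirroring Lemma~\ref{perfect reflexivity}. For Augmentation, given a memoryless strategy $s$ witnessing $T\vDash A\rhd B$, the same $s$ works for $(A\cup C)\rhd(B\cup C)$ via the chain
$$Path_s(A\cup C)=Path_s(A)\cup Path_s(C)\subseteq Visit(B)\cup Path_s(C)\subseteq Visit(B)\cup Visit(C)=Visit(B\cup C),$$
which is precisely the computation in the proof of Lemma~\ref{perfect augmentation}.

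For Monotonicity with $A\subseteq A'$, suppose $s$ is a memoryless strategy such that $Path_s(A')\subseteq Visit(B)$. Then item~1 of (the memoryless analog of) Lemma~\ref{path properties} yields $Path_s(A)\subseteq Path_s(A')\subseteq Visit(B)$, so the same $s$ witnesses $T\vDash A\rhd B$ by Definition~\ref{imperfect sat}. There is no analog of Lemma~\ref{perfect transitivity} to produce here, and this is no accident: the example in the introduction shows Transitivity genuinely fails for memoryless strategies in $T_0$, so the whole delicate $\circ_B$ composition machinery from the recall section is unnecessary and, in fact, unavailable. The ``obstacle'', then, is purely bookkeeping: I simply need to recheck that Lemma~\ref{path properties} is indeed proof-for-proof valid for memoryless strategies, which it is. With the three axiom schemas validated and Modus Ponens handled in the standard way, soundness follows by induction on the length of a derivation.
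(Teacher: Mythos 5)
Your proof is correct and follows essentially the same route as the paper's: the paper's own (very terse) proof likewise reduces Reflexivity and Augmentation to the perfect-recall arguments of Lemmas~\ref{perfect reflexivity} and~\ref{perfect augmentation}, and handles Monotonicity via the single inclusion $Path_s(A)\subseteq Path_s(A')$ for $A\subseteq A'$. You have simply written out the details the paper leaves implicit, including the (correct) observation that Lemma~\ref{path properties} is strategy-type agnostic.
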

\begin{proof}
Soundness of the Reflexivity axiom and the Augmentation axiom is similar to the case of perfect recall, see Theorem~\ref{perfect soundness}. Soundness of the Monotonicity axiom follows from $Path_s(A)\subseteq Path_s(A')$, where $A\subseteq A'$.
\end{proof}

\subsubsection{Completeness}
In the rest of this section we prove completeness of our logical system with respect to the memoryless semantics. First, we define a canonical transition system $T(X)=(S,\sim, *, I,\{\Delta_i\}_{i\in I})$ for an arbitrary maximal consistent set of formulae $X\subseteq \Phi$. 

Like in the perfect recall case, the canonical system has one state for each view and an additional ``black hole" state $\circlearrowleft$. Unlike the previous construction, the new canonical transition system has more additional states besides state $\circlearrowleft$. Drawing on our original intuition of a transition system as a maze, we think about these new states as ``wormholes". For any sets of states $A$ and $B$ in the maze there is a wormhole state $w(A,B)$ that can be used to travel one-way from set $A$ to set $B$. Then, $S=V\cup\{\circlearrowleft\}\cup\{w(A,B)\;|\;A,B\subseteq V\}$.

The agent can distinguish any two different non-wormhole states, but she can not distinguish wormholes. In other words, each non-wormhole state $v\in V\cup \{\circlearrowleft\}$ forms its own indistinguishability class $[v]=\{v\}$, while all wormholes belong to the same single indistinguishability class of wormholes. Like in the perfect recall case, for each $v\in V$, we define $v^*$ to be class $[v]$.





\begin{lemma}\label{imperfect * iff}
$u\in A$ iff $[u]\in A^*$ for each view $u\in V$ and each set $A\subseteq V$. \qed
\end{lemma}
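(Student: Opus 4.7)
The plan is to prove the two directions separately and exploit the key structural feature of the canonical construction: every view $u \in V$ has a singleton indistinguishability class $[u] = \{u\}$, because the only non-singleton class in the canonical transition system is the class of wormholes, and no wormhole lies in $V$.

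For the forward direction, if $u \in A$ then $u^* = [u]$ is by definition an element of $A^* = \{a^* \mid a \in A\}$, so $[u] \in A^*$. This step is immediate from Definition~\ref{*set} and the definition of the map $*$ for the canonical system.

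For the reverse direction, suppose $[u] \in A^*$. Then by Definition~\ref{*set} there is some $a \in A$ with $[u] = a^* = [a]$. Since $A \subseteq V$ we have $a \in V$, so both $[u]$ and $[a]$ are singletons, namely $[u] = \{u\}$ and $[a] = \{a\}$. The equality $\{u\} = \{a\}$ forces $u = a$, hence $u \in A$.

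I do not expect any real obstacle here; the entire content is bookkeeping about the canonical indistinguishability relation. The only thing worth double-checking is that $V$ and the set of wormholes are disjoint in the definition of $S$, so that a view $u$ cannot accidentally share an indistinguishability class with a wormhole. This is guaranteed by the stipulation that each $v \in V \cup \{\circlearrowleft\}$ forms its own class $[v] = \{v\}$, which is exactly what makes the singleton-class argument go through.
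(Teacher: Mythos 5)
Your proof is correct and is exactly the routine verification the paper has in mind: the paper marks this lemma with $\boxtimes$ and omits the argument as immediate, and your two directions (forward from Definition~\ref{*set}, reverse from the fact that every view forms a singleton indistinguishability class in the canonical system) are the intended content.
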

Like in the canonical model for the perfect recall case, for sets $A,B\subseteq V$ such that $X\vdash A\rhd B$, we introduce an instruction $(A,B)$ that can be used to navigate from set $A$ to set $B$. Unlike the perfect recall case, we introduce such an instruction only if sets $A$ and $B$ are disjoint. This is an insignificant technical restriction that we use to simplify the proof of Lemma~\ref{imperfect vDash to vdash}.

\begin{definition}\label{imperfect canonical I}
$$I=\{(A,B)\;|\;X\vdash A\rhd B \mbox{ and } A\cap B=\varnothing\}.$$
\end{definition}
Recall that assumption $X\vDash A\rhd B$ implies that sets $A$ and $B$ are nonempty due to Definition~\ref{Phi}.

In the perfect recall case, instruction $(A,B)$ can be used to transition the system {\em directly} from a state in set $A$ to a state in set $B$. In our case, this transition happens via the wormhole state $w(A,B)$. In other words, when instruction $(A,B)$ is invoked in a state from set $A$, the system transitions into state $w(A,B)$. When the same instruction is invoked in  $w(A,B)$, the system transitions into a state in $B$.
\begin{definition}\label{imperfect canonical Delta}
$$
\Delta_{(A,B)}(u)=
\begin{cases}
\{w(A,B)\}, & \mbox{if $u\in A$},\\
B, & \mbox{if $u=w(A,B)$},\\
\{\circlearrowleft\}, & \mbox{otherwise}.
\end{cases}
$$
\end{definition}

\begin{lemma}\label{imperfect vdash to vDash disjoint}
If $X\vdash A\rhd B$ and sets $A$ and $B$ are disjoint, then $T(X)\vDash A\rhd B$.
\end{lemma}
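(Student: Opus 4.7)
The plan is to exhibit an explicit memoryless strategy $s$ that works, namely the constant strategy that always issues instruction $(A,B)$. The hypotheses $X\vdash A\rhd B$ and $A\cap B=\varnothing$ ensure that $(A,B)\in I$ by Definition~\ref{imperfect canonical I}, so this instruction is actually available in the canonical transition system. I would define $s[v]=(A,B)$ for \emph{every} equivalence class $[v]$ of the canonical system, including the single class of all wormhole states; constancy automatically makes $s$ a well-defined function on $S/\!\sim$.

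The next step is to trace an arbitrary path $w_0,i_1,w_1,i_2,w_2,\dots\in Path_s(A)$ and show it lies in $Visit(B)$. By Definition~\ref{path set} we have $[w_0]\in A^*$, so $w_0\in A$ by Lemma~\ref{imperfect * iff}. Since $i_1=s[w_0]=(A,B)$ and $w_0\in A$, Definition~\ref{imperfect canonical Delta} gives $\Delta_{(A,B)}(w_0)=\{w(A,B)\}$, whence $w_1=w(A,B)$. Now $i_2=s[w_1]=(A,B)$ as well (by constancy of $s$), and Definition~\ref{imperfect canonical Delta} yields $w_2\in\Delta_{(A,B)}(w(A,B))=B$. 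Applying Lemma~\ref{imperfect * iff} again, $[w_2]\in B^*$, so the path belongs to $Visit(B)$ by Definition~\ref{visit set}.

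Having shown $Path_s(A)\subseteq Visit(B)$, I conclude $T(X)\vDash A\rhd B$ by Definition~\ref{imperfect sat}. The potential obstacle one might worry about is that a memoryless strategy cannot distinguish between wormholes from different $(A,B)$-pairs, so a poorly chosen strategy could get trapped. The constant choice sidesteps this completely: after leaving $A$ the path is forced to the particular wormhole $w(A,B)$, and from there the same instruction deterministically lands in $B$ in the next step. The disjointness hypothesis $A\cap B=\varnothing$ is used only implicitly, by guaranteeing that $(A,B)$ lies in $I$; the routing argument itself does not rely on it.
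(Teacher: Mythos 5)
Your proof is correct and follows essentially the same route as the paper's: take the constant memoryless strategy issuing $(A,B)$ everywhere, and trace any path from $A$ through the wormhole $w(A,B)$ into $B$ in two steps. Your closing observation that disjointness is needed only to place $(A,B)$ in $I$ (and is really there for the converse lemma) matches the paper's own remark about this being an ``insignificant technical restriction.''
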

\begin{proof}
 Assumptions $X\vdash A\rhd B$ and $A\cap B=\varnothing$ imply that $(A,B)\in I$, by Definition~\ref{imperfect canonical I}. Consider a memoryless strategy $s$ such that $s(x)=(A,B)$ for each class $x$. By Definition~\ref{imperfect sat}, it suffices to show that $Path_s(A)\subseteq Visit(B)$.

Consider any $w_0,i_1,w_1,\dots\in Path_s(A)$. Then, by Definition~\ref{path set}, sequence $w_0,i_1,w_1,\dots$ is a path under strategy $s$ such that $[w_0]\in A^*$. Hence,  $w_0\in A$ by Lemma~\ref{imperfect * iff}. Thus, 
\begin{equation}\label{Delta w0}
    \Delta_{(A,B)}(w_0)=\{w(A,B)\}
\end{equation}
by Definition~\ref{imperfect canonical Delta}. 

At the same time, $w_1\in\Delta_{i_1}(w_0)$ by Definition~\ref{history}. Hence, $w_1\in\Delta_{s[w_0]}(w_0)$ by Definition~\ref{path}. Thus, $w_1\in\Delta_{(A,B)}(w_0)$ by the choice of strategy $s$. Then, $w_1=w(A,B)$ by equation~(\ref{Delta w0}). Hence,
\begin{equation}\label{Delta w1}
    \Delta_{(A,B)}(w_1)=B
\end{equation}
by Definition~\ref{imperfect canonical Delta}. 

Similarly, $w_2\in\Delta_{i_2}(w_1)$ by Definition~\ref{history}. Hence, $w_2\in\Delta_{s[w_1]}(w_1)$ by Definition~\ref{path}. Thus, $w_2\in\Delta_{(A,B)}(w_1)$ by the choice of strategy $s$. Then, $w_2\in B$ by equation~(\ref{Delta w1}). Hence, $[w_2]\in B^*$ by Lemma~\ref{imperfect * iff}. Therefore, $w_0,i_1,w_1,i_2,w_2,\dots\in Visit(B)$ by Definition~\ref{visit set}.
\end{proof}

\begin{lemma}\label{imperfect vdash to vDash}
If $X\vdash A\rhd B$, then $T(X)\vDash A\rhd B$.
\end{lemma}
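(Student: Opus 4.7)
The plan is to reduce the general case to the already-proved disjoint case of Lemma~\ref{imperfect vdash to vDash disjoint} by stripping away the intersection $A\cap B$ from $A$ using the Monotonicity axiom. The key semantic observation is that any path whose initial state lies in $A\cap B$ is already in $Visit(B)$ at step~$0$, so navigation needs to be produced only for starting states in $A':=A\setminus B$.

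I would begin with a case split on whether $A\subseteq B$. In that case $A\rhd B$ is an instance of the Reflexivity axiom, so $T(X)\vDash A\rhd B$ follows from the soundness argument given in the proof of Theorem~\ref{imperfect soundness}. Otherwise $A'$ is a nonempty proper subset of $A$. Since $A'\subseteq A$ and $X\vdash A\rhd B$, the Monotonicity axiom yields $X\vdash A'\rhd B$. By construction, $A'\cap B=\varnothing$ and both sets are nonempty, so Lemma~\ref{imperfect vdash to vDash disjoint} supplies a memoryless strategy $s$ with $Path_s(A')\subseteq Visit(B)$.

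The last step is to show that this same $s$ already witnesses $T(X)\vDash A\rhd B$. Given any path $\pi=w_0,i_1,w_1,\dots\in Path_s(A)$, Lemma~\ref{imperfect * iff} gives $w_0\in A$, so either $w_0\in A'$, in which case $\pi\in Path_s(A')\subseteq Visit(B)$, or $w_0\in A\cap B\subseteq B$, in which case $[w_0]=\{w_0\}\in B^*$ by Lemma~\ref{imperfect * iff} and hence $\pi\in Visit(B)$ immediately by Definition~\ref{visit set}. The only real obstacle is spotting that Monotonicity is precisely the tool needed here: one cannot imitate the perfect-recall proof via Transitivity, since Transitivity is unsound for memoryless strategies and is absent from this axiom system. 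Everything else is a short set-theoretic verification in the canonical model, where each non-wormhole state forms its own indistinguishability class.
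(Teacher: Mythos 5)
Your proof is correct and follows essentially the same route as the paper: split on whether $A\subseteq B$, handle that case by the soundness of Reflexivity, and otherwise apply Monotonicity to pass to the disjoint set $A\setminus B$ and invoke Lemma~\ref{imperfect vdash to vDash disjoint}. The only cosmetic difference is that where the paper concludes by citing the soundness of the Augmentation axiom, you inline that argument by observing directly that any path starting in $A\cap B$ already lies in $Visit(B)$ at step~$0$.
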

\begin{proof}
Suppose that $X\vdash A\rhd B$. 

If $A\setminus B\neq \varnothing$. Thus, $X\vdash A\setminus B\rhd B$ by the Monotonicity Axiom. Hence, $T(X)\vDash A\setminus B\rhd B$ by Lemma~\ref{imperfect vdash to vDash disjoint}. Therefore, $T(X)\vDash A\rhd B$ due to the soundness of the Augmentation axiom, see Theorem~\ref{imperfect soundness}.

If $A\setminus B=\varnothing$, then $A\subseteq B$. Therefore, $T(X)\vDash A\rhd B$ due to the soundness of the Reflexivity axiom.
\end{proof}

Recall that all wormhole states belong to a single indistinguishability class of wormholes. For any memoryless strategy $s$, let $(A_s,B_s)$ be the instruction assigned by strategy $s$ to the class of wormholes. Once strategy $s$ is fixed, the states of the canonical transition system can be partitioned into five groups: set $A_s$, set $V\setminus A_s$, the single element set $\{\circlearrowleft\}$ containing the black hole state, the single element set $\{w(A_s,B_s)\}$ containing the wormhole state $w(A_s,B_s)$, and the set $\{w(C,D)\;|\; (C,D)\neq (A_s,B_s)\}$ of all other wormholes. Definition~\ref{imperfect canonical Delta} restricts transitions under strategy $s$ that are possible between these five groups of states. For example, from set $V\setminus A_s$ one can transition either into set $\{\circlearrowleft\}$ or into set $\{w(C,D)\;|\; (C,D)\neq (A_s,B_s)\}$. The arrows in Figure~\ref{trap figure} show all possible transitions between these five groups of states allowed under Definition~\ref{imperfect canonical Delta}. These five groups of states can be further classified into ``above the line" and ``below the line" states, as shown. Notice that once the system transitions into one of the ``below the line" states, it is trapped there and it will never be able to transition under the memoryless strategy $s$ into an ``above the line" state.

\begin{figure}[ht]
\begin{center}
\vspace{-0mm}
\scalebox{.7}{\includegraphics{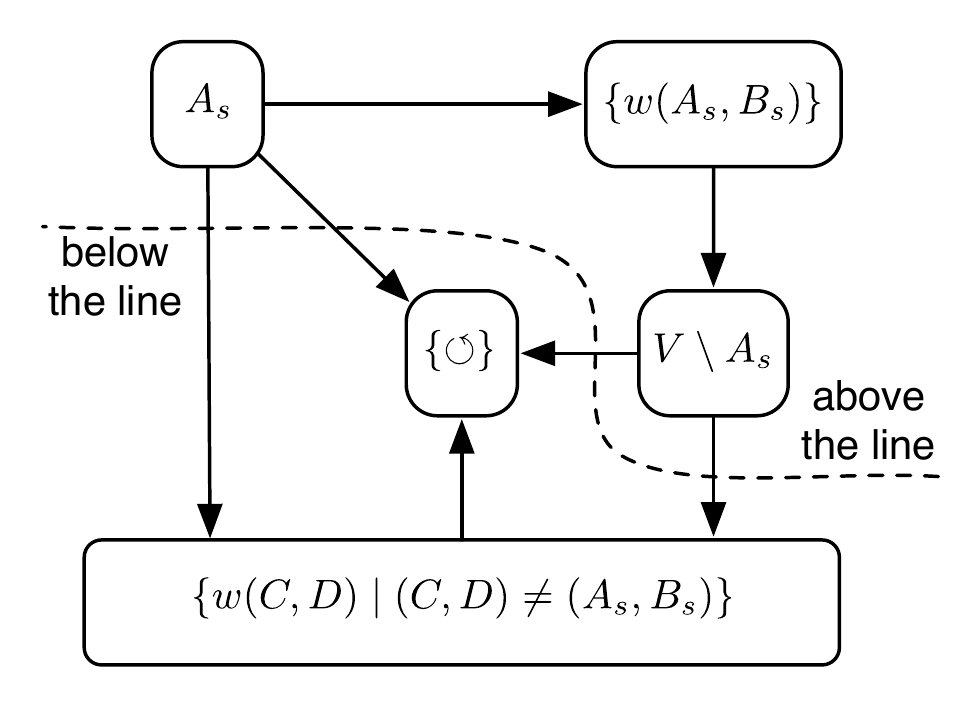}}
\caption{Transitions under memoryless strategy $s$.}\label{trap figure}
\end{center}
\vspace{0mm}
\end{figure}

\begin{lemma}\label{imperfect vDash to vdash}
If $T(X)\vDash E\rhd G$, then $X\vdash E\rhd G$.
\end{lemma}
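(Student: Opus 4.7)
The plan is to turn the memoryless strategy $s$ witnessing $T(X)\vDash E\rhd G$ into a derivation $X\vdash E\rhd G$ using only Reflexivity, Augmentation, and Monotonicity. All the leverage on $X$ comes from a single instruction: let $(A_s,B_s)\in I$ be the value that $s$ assigns to the unique indistinguishability class of wormhole states. By Definition~\ref{imperfect canonical I}, this gives $X\vdash A_s\rhd B_s$ together with the disjointness $A_s\cap B_s=\varnothing$, and this will be the only provable fact of $X$ that the derivation uses.

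First I would carry out a reachability analysis of the canonical system under $s$ via Definition~\ref{imperfect canonical Delta}. Because every wormhole is governed by $(A_s,B_s)$, the only wormhole from which the system can return to $V$ is $w(A_s,B_s)$; every other wormhole routes to $\circlearrowleft$. Consequently, for $v\in V\setminus G$, unless $v\in A_s$ and $s[v]=(A_s,B_s)$, the initial segment of every path from $v$ reaches $\circlearrowleft$ within at most two steps and never visits $G$. Setting $K=\{v\in A_s\setminus G:s[v]=(A_s,B_s)\}$, the hypothesis $Path_s(E)\subseteq Visit(G)$ therefore forces $E\setminus G\subseteq K$, hence $E\subseteq G\cup K\subseteq G\cup A_s$.

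Next I would split on whether $B_s\subseteq G$. In the easy case $B_s\subseteq G$, Augmentation applied to $X\vdash A_s\rhd B_s$ (with set $G$) yields $X\vdash A_s\cup G\rhd B_s\cup G=G$, and Monotonicity (using $E\subseteq A_s\cup G$) delivers $X\vdash E\rhd G$. If instead $B_s\nsubseteq G$, pick $u_0\in B_s\setminus G$; by disjointness $u_0\notin A_s$, so every continuation from $u_0$ reaches $\circlearrowleft$ without visiting $G$. Using the nondeterministic choice $u_0\in\Delta_{(A_s,B_s)}(w(A_s,B_s))$, from any $v\in K$ one builds a path $v,\,w(A_s,B_s),\,u_0,\ldots$ that avoids $G$ forever, contradicting $Path_s(E)\subseteq Visit(G)$ whenever $v\in E$. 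Hence $K\cap E=\varnothing$, so $E\subseteq G$, and Reflexivity gives $X\vdash E\rhd G$.

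The most delicate step is the reachability analysis and the accompanying bad-path construction in the case $B_s\nsubseteq G$. The syntactic disjointness $A\cap B=\varnothing$ baked into Definition~\ref{imperfect canonical I} is exactly what allows one to conclude $u_0\notin A_s$ for a witness $u_0\in B_s\setminus G$, killing every non-$G$ continuation in a single step and avoiding any recursive fixed-point construction in the spirit of the $G^s_\infty$ chain used in the perfect-recall completeness proof.
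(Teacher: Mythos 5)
Your proposal is correct and follows essentially the same route as the paper's proof: both isolate the single wormhole instruction $(A_s,B_s)$, use the ``trap'' structure of the canonical transitions to force $E\setminus G\subseteq A_s$ with $s$ actually invoking $(A_s,B_s)$ there, exploit the disjointness $A_s\cap B_s=\varnothing$ to build a $G$-avoiding path through $w(A_s,B_s)$ into a witness of $B_s\setminus G$, and finish with the identical Augmentation-plus-Monotonicity derivation from $X\vdash A_s\rhd B_s$. The only difference is organizational — you case-split on $B_s\subseteq G$ where the paper case-splits on $E\subseteq G$ and proves $B_s\subseteq G$ by contradiction — which is just a contrapositive rearrangement of the same argument.
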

\begin{proof}
If $E\subseteq G$, then $\vdash E\rhd G$ by the Reflexivity axiom. In the rest of the proof we suppose that there is $e_0\in E\setminus G$. By Definition~\ref{imperfect sat}, assumption $T(X)\vDash E\rhd G$ implies that there exists a memoryless strategy $s$ such that $Path_s(E)\subseteq Visit(G)$.

First, we show that $E\setminus G\subseteq A_s$. Suppose there is a view $e_1\in E$ such that $e_1\notin G$ and $e_1\notin A_s$. By Definition~\ref{history}, single-element sequence $e_1$ is a history. Thus, by Lemma~\ref{path exists}, there is a path $\pi\in Path_s(E)$ that starts with state $e_1$. Since path $\pi$ starts in a state from set $V\setminus A_s$, all non-initial states of this path are ``below the line", see Figure~\ref{trap figure}. Hence, neither of the states in path $\pi$ belong to set $G$, because $e_1\notin G$ and none of the states ``below the line" are in set $G$ either.     Thus, $\pi\in Path_s(E)$ and $\pi\notin Visit(G)$, which contradicts the choice of strategy $s$. Therefore, $E\setminus G\subseteq A_s$. In particular $e_0\in A_s$.

Second, we show that $s[e_0]=(A_s,B_s)$. Suppose that $s[e_0]=(C,D)$, where $(C,D)\neq (A_s,B_s)$. If $e_0\notin C$, then  $\pi=e_0,(C,D),\circlearrowleft,s[\circlearrowleft],\circlearrowleft,s[\circlearrowleft],\dots$ is a path under strategy $s$ by Definition~\ref{path} and Definition~\ref{imperfect canonical Delta}. Note that $\pi\notin Visit(G)$ because $e_0\notin G$ and $\pi\in Path_s(E)$. This contradicts $Path_s(E)\subseteq Visit(G)$. Similarly, if $e_0\in C$, then  sequence $\pi=e_0,(C,D),w(C,D),(A_s,B_s),\circlearrowleft,s[\circlearrowleft],\dots$ is a path such that $\pi\notin Visit(G)$ and $\pi\in Path_s(E)$, which again contradicts $Path_s(E)\subseteq Visit(G)$.

Third, we prove that $B_s\subseteq G$. Suppose that there is a state $b_0\in B_s\setminus G$. By Definition~\ref{history}, Definition~\ref{imperfect canonical Delta}, and the choice of instruction $(A_s,B_s)$, sequence $e_0,(A_s,B_s),w(A_s,B_s),(A_s,B_s),b_0$ is a history. Thus, by Lemma~\ref{path exists}, there is a path $\pi\in Path_s(E)$ that starts as $e_0,(A_s,B_s),w(A_s,B_s),(A_s,B_s),b_0$. Assumption $b_0\in B_s\setminus G$ implies that $b_0\in V\setminus A_s$ because sets $A_s$ and $B_s$ are disjoint by Definition~\ref{imperfect canonical I}. Thus, path $\pi$ after state $b_0$ contains only states ``below the line", see Figure~\ref{trap figure}, none of which are in set $G\subseteq V$. Recall also that $e_0,b_0\notin G$ by the choice of states $e_0$ and $b_0$. Thus, $\pi\in Path_s(E)$ and $\pi\notin Visit(G)$, which again contradicts $Path_s(E)\subseteq Visit(G)$.

Note that $X\vdash A_s\rhd B_s$ by Definition~\ref{imperfect canonical I}. Hence, $X\vdash (A_s\cup G)\rhd (B_s\cup G)$ by the Augmentation axiom. Thus, $X\vdash (A_s\cup G)\rhd G$ because $B_s\subseteq G$. At the same time, $E\setminus G\subseteq A_s$ implies that $E\subseteq A_s\cup G$. Therefore, $X\vdash E\rhd G$ by the Monotonicity axiom.

\end{proof}

\begin{lemma}\label{imperfect induction}
$X\vdash\phi$ iff $T(X)\vDash\phi$ for each $\phi\in \Phi$.
\end{lemma}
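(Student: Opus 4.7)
The plan is to prove this by induction on the structural complexity of the formula $\phi$, mirroring the exact argument used for Lemma~\ref{perfect induction} in the perfect recall case. The atomic case does all the real work, while the propositional cases are a standard unwinding of maximality and consistency.

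For the base case, $\phi$ has the form $A\rhd B$ with $A,B\subseteq V$ nonempty. The forward direction ($X\vdash A\rhd B$ implies $T(X)\vDash A\rhd B$) is exactly Lemma~\ref{imperfect vdash to vDash}, and the reverse direction is exactly Lemma~\ref{imperfect vDash to vdash}. So there is nothing new to prove here.

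For the induction step, suppose the claim holds for all subformulae of $\phi$. If $\phi=\neg\psi$, then maximality and consistency of $X$ give that $X\vdash\neg\psi$ iff $X\nvdash\psi$; by the induction hypothesis the latter is equivalent to $T(X)\nvDash\psi$, which is the definition of $T(X)\vDash\neg\psi$. If $\phi=\psi_1\to\psi_2$, a standard propositional argument using maximal consistency shows $X\vdash\psi_1\to\psi_2$ iff ($X\vdash\psi_1$ implies $X\vdash\psi_2$); the induction hypothesis translates each side to the semantic level, giving the required biconditional.

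There is no real obstacle in this lemma, since all of the difficulty was handled in Lemma~\ref{imperfect vdash to vDash} (via the detour through disjoint sets and the wormhole construction) and Lemma~\ref{imperfect vDash to vdash} (via the ``above/below the line'' trap analysis). The present lemma simply packages those two facts with the routine propositional induction enabled by the maximal consistency of $X$.
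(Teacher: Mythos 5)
Your proposal is correct and follows exactly the same route as the paper: induction on the structural complexity of $\phi$, with the base case delegated to Lemma~\ref{imperfect vdash to vDash} and Lemma~\ref{imperfect vDash to vdash} and the propositional cases handled by the standard maximal-consistency argument. The only difference is that you spell out the negation and implication cases, which the paper leaves as ``standard.''
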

\begin{proof}
We prove this lemma by induction on the structural complexity of $\phi$. The base case follows from Lemma~\ref{imperfect vdash to vDash} and Lemma~\ref{imperfect vDash to vdash}. The induction case follows from the maximality and the consistency of set $X$ in the standard way.
\end{proof}

We are now ready to state and prove the completeness theorem for memoryless strategies.
\begin{theorem}\label{imperfect completeness}
If $T\vDash\phi$ for every system $T$, then $\vdash\phi$.
\end{theorem}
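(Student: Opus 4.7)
The plan is to prove the contrapositive: if $\nvdash\phi$, then there exists some transition system $T$ for which $T\nvDash\phi$. This parallels exactly the approach used in Theorem~\ref{perfect completeness} for the recall case, and all the heavy lifting has already been done in the sequence of lemmas culminating in Lemma~\ref{imperfect induction}.

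First I would assume $\nvdash\phi$. By standard propositional reasoning (Lindenbaum's lemma applied to the consistent set $\{\neg\phi\}$), extend this to a maximal consistent set $X\subseteq\Phi$ containing $\neg\phi$. Then I would invoke the canonical transition system construction defined earlier in this section: $T(X)=(S,\sim,*,I,\{\Delta_i\}_{i\in I})$ with views, a black hole $\circlearrowleft$, and wormhole states $w(A,B)$, where instructions are drawn from $X$-provable disjoint pairs per Definition~\ref{imperfect canonical I} and transitions are as in Definition~\ref{imperfect canonical Delta}.

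Next I would apply Lemma~\ref{imperfect induction} to the formula $\neg\phi\in X$, which yields $T(X)\vDash\neg\phi$. By the standard semantics of negation in $\Phi$ (inherited through the induction step of Lemma~\ref{imperfect induction}), this is equivalent to $T(X)\nvDash\phi$. Hence $T(X)$ is the counterexample we need, establishing the contrapositive.

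There is essentially no obstacle at this stage, because the real technical work was absorbed into Lemma~\ref{imperfect vdash to vDash} and Lemma~\ref{imperfect vDash to vdash}, whose combination gives the base case of Lemma~\ref{imperfect induction}. The only conceptual subtlety, which is worth flagging but handled upstream, is that in the memoryless setting one cannot rely on the direct transitivity mechanism of Armstrong's axioms for the canonical model; instead the wormhole construction together with the Monotonicity axiom recovers completeness, and this is what Lemma~\ref{imperfect vDash to vdash} exploits via the ``above the line / below the line'' trap argument of Figure~\ref{trap figure}. Once that lemma is in hand, the completeness theorem is a one-line consequence.
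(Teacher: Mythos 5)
Your proposal is correct and follows exactly the same route as the paper: assume $\nvdash\phi$, extend $\{\neg\phi\}$ to a maximal consistent set $X$, and apply Lemma~\ref{imperfect induction} to the canonical system $T(X)$ to obtain $T(X)\vDash\neg\phi$ and hence $T(X)\nvDash\phi$. Nothing further is needed.
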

\begin{proof}
Suppose $\nvdash\phi$. Let $X$ be a maximal consistent set containing formula $\neg\phi$. Thus, $T(X)\vDash\neg\phi$ by Lemma~\ref{imperfect induction}. Therefore, $T(X)\nvDash\phi$.
\end{proof}

\section{Conclusion}
In this paper we have shown that the properties of navigability under perfect recall strategies are exactly those described by Armstrong's axioms for functional dependency in database theory. In the absence of perfect recall, the Transitivity axiom is no longer valid, but it could be replaced by the Monotonicity axiom.

\bibliographystyle{aaai}

\end{document}